\newtcolorbox{blockquote}{colback=orange!15!white,grow to right by=-1mm,grow to left by=-1mm,boxrule=0pt,boxsep=0pt,breakable}
\title[Minimax Lower Bounds for Episodic RL Revisited]{Episodic Reinforcement Learning in Finite MDPs: \\ Minimax Lower Bounds Revisited}
\begin{document}

\maketitle

\begin{abstract}
In this paper, we propose new problem-independent lower bounds on the sample complexity and regret in episodic MDPs, with a particular focus on the \emph{non-stationary case} in which the transition kernel is allowed to change in each stage of the episode. Our main contribution is a novel lower bound of $\Omega((H^3SA/\epsilon^2)\log(1/\delta))$ on the sample complexity of an $(\varepsilon,\delta)$-PAC algorithm for best policy identification in a non-stationary MDP. This lower bound relies on a construction of ``hard MDPs'' which is different from the ones previously used in the literature. Using this same class of MDPs, we also provide a rigorous proof of the $\Omega(\sqrt{H^3SAT})$ regret bound for non-stationary MDPs. Finally, we discuss connections to PAC-MDP lower bounds.
\end{abstract}

\begin{keywords}%
  reinforcement learning, episodic, lower bounds%
\end{keywords}


	\section{Introduction}

	In episodic reinforcement learning (RL), an agent interacts with an environment in episodes of length $H$. In each stage $h \in \braces{1, \ldots, H}$, the agent is in a state $s_h$, takes an action $a_h$ then observes the next state $s_{h+1}$ sampled according to a transition kernel $p_h(\cdot|s_h, a_h)$, and receives a reward $r_h(s_h, a_h)$. The quality of a RL algorithm, which adaptively selects the next action to perform based on past observation, can be measured with different performance metrics.

	On the one hand, the sample complexity quantifies the number of episodes in which an algorithm makes mistakes (in the PAC-MDP setting) or the number of episodes needed before outputing a near optimal policy (in the best policy identification setting). On the other hand, the regret quantifies the difference between the total reward gathered by an optimal policy and that of the algorithm. Minimax upper bounds on the sample complexity or the regret of episodic RL algorithms in finite MDPs have been given in the prior work, for instance in the work of \citet{dann2015sample,Dann2017,Azar2017, Jin2018}, and \citet{Zanette2019}. Deriving \emph{lower bounds} is also helpful to assess the quality of these upper bounds, in particular in terms of their scaling in the horizon $H$, the number of states $S$ and the number of actions $A$.

    \paragraph{Sample complexity lower bounds} Sample complexity has mostly been studied in the $\gamma$-discounted setting for PAC-MDP algorithms \citep{Kakade03PhD}, for which the number of time steps in which an algorithm acts $\varepsilon$-sub-optimaly (called the \emph{sample complexity}) has to be upper bounded, with probability larger than $1-\delta$. State-of-the art lower bounds are a $\Omega\pa{\tfrac{SA}{\epsilon^2}\log\pa{\tfrac{S}{\delta}}}$ bound by \cite{Strehl09} and a $\Omega\left({\tfrac{SA}{(1-\gamma)^3\epsilon^2}\log\pa{\tfrac{1}{\delta}}}\right)$ bound by \cite{lattimore2012pac}.  A lower bound of the same order is provided by \cite{azar2012sample} for the number of steps algorithms that have access to a generative model need to identify an $\varepsilon$-optimal policy.

    PAC-MDP algorithms in the episodic setting were later studied by \cite{dann2015sample}, who also provide a lower bound. Unlike the previous ones, they \emph{do not }lower bound the number of $\varepsilon$-mistakes of the algorithm, but rather state that any algorithm that outputs a deterministic policy $\widehat{\pi}$ that is $\epsilon$-optimal with probability at least $1-\delta$, there exists an MDP where the expected number of episodes before $\widehat{\pi}$ is returned must be at least $\Omega\pa{\frac{SAH^2}{\epsilon^2}\log\pa{\frac{1}{\delta}}}$. This lower bound therefore applies to the sample complexity of best-policy identification (see Section~\ref{sec:setting} for a formal definition), which is our main focus in this paper. The ``hard MDP'' instances used to prove this worse-case bound are inspired by the the ones of \citet{Strehl09} and consist in $S$ multi-armed bandit problems played in parallel. While these hard instances all have transition kernels that are identical for each step $h$ of the episode (i.e. $p_h(\cdot|s_h,a_h)$ does not depend on $h$), we propose in this paper an alternative class of hard instances in which the transitions may be stage-dependent. We prove in Theorem~\ref{theorem:bpi-non-stationary} that there exists an MDP in this class for which the expected number of samples needed to identify an $\varepsilon$-optimal policy with probability $1-\delta$ is at least $\Omega\pa{\tfrac{SAH^3}{\epsilon^2}\log\pa{\tfrac{1}{\delta}}}$.

	\paragraph{Regret lower bounds} In the average-reward setting, \citet{jaksch2010near} prove a regret lower bound of $\Omega(\sqrt{DSAT})$ where $D$ is the diameter of the MDP and $T$ is the total number of actions taken in the environment. In the episodic setting, the total number of actions taken is $HT$, where $T$ is now the number of episodes, and $H$ is roughly the equivalent of the diameter $D$.\footnote{The diameter $D$ is the minimum average time to go from one state to another. In an episodic MDP, if the agent can come back to the same initial state $s_1$ after $H$ steps, the average time between any pair of states is bounded by $2H$, if we restrict the state set to the states that are reachable from $s_1$ in $H$ steps.}
	Hence, intuitively, the lower bound of \citet{jaksch2010near} should be translated to $\Omega(\sqrt{H^2SAT})$ for episodic MDPs after $T$ episodes. Yet, to the best of our knowledge, a precise proof of this claim has not been given in the literature. The proof of \citet{jaksch2010near} relies on building a set of hard MDPs with ``bad'' states (with zero reward) and ``good'' states (with reward 1), and can be adapted to episodic MDPs by making the good states absorbing. However, this construction does not include MDPs whose transitions are allowed to change at every stage $h$. In the case of stage-dependent transitions, \cite{Jin2018} claim that the lower bound becomes $\Omega(\sqrt{H^3SAT})$, by using the construction of \cite{jaksch2010near} and a mixing-time argument, but they do not provide a complete proof. In Theorem~\ref{theorem:regret-non-stationary}, we provide a detailed proof of their statement, by relying on the same class of hard MDPs given for our sample complexity lower bound.

	\paragraph{Our contributions} Our main contribution are unified, simple, and complete proofs of minimax lower bounds for different episodic RL settings. In particular, using a single class of hard MDPs and the same information-theoretic tools, we provide regret and sample complexity lower bounds for episodic reinforcement learning algorithms for stage-dependent transitions. For~$T$ episodes, the regret bound is $\Omega(\sqrt{H^3SAT})$, which is the same as the one sketched by \cite{Jin2018}, but we provide a detailed proof with a different construction. This lower bound is matched by the optimistic Q-learning algorithm of \citet{Jin2018}. For the sample complexity of best-policy identification (BPI), we prove the first lower bound for MDPs with stage-dependent transitions and for algorithms that may output randomized policies. This bound is of order $\Omega\pa{\frac{SAH^3}{\epsilon^2}\log\pa{\frac{1}{\delta}}}$ and is matched by the BPI-UCBVI algorithm of \cite{RFExpress}. As a corollary of the BPI lower bound, we also obtain a lower bound of $\Omega\pa{\frac{SAH^3}{\epsilon^2}\log\pa{\frac{1}{\delta}}}$ in the PAC-MDP setting.
	Finally, note that our proof technique also provides rigorous proofs of the bounds $\Omega(\sqrt{H^2SAT})$ and $\Omega\pa{\frac{SAH^2}{\epsilon^2}\log\pa{\frac{1}{\delta}}}$ for regret and best-policy identification with stage-independent transitions.

	\section{Setting and Performance Measures}\label{sec:setting}

	\paragraph{Markov decision process} We consider an episodic Markov decision process (MDP) defined as a tuple $\mdp\eqdef(\cS, \cA, H, \mu, p, r)$ where $\cS$ is the set of states, $\cA$ is the set of actions, $H$ is the number of steps in one episode, $\mu$ is the initial state distribution, $p = \braces{p_h}_h$ and $r = \braces{r_h}_h$ are sets of transitions and rewards for $h \in [H]$ such that taking an action $a$ in state $s$ results in a reward $r_h(s, a) \in [0, 1]$ and a transition to $s' \sim p_h(\cdot|s,a)$. We assume that the cardinalities of $\cS$ and $\cA$ are finite, and  denote them by $S$ and $A$, respectively.
	
	\paragraph{Markov and history-dependent policies }
	Let $\simplex(\cA)$ be the set of probability distributions over the action set and let 
	\begin{align*}
		\histset_h^{\,t} =   \pa{(\cS\times\cA)^{H-1}\times \cS}^{t-1} \times (\cS\times\cA)^{h-1}\times \cS
	\end{align*}
	be the set of possible histories up to step $h$ of episode $t$, that is, the set of tuples of the form 
	\begin{align*}
		\left(s_1^1, a_1^1, s_2^1, a_2^1, \ldots, s_H^1, \ldots, s_1^t, a_1^t, s_2^t, a_2^t, \ldots, s_h^t\right) \in \histset_h^{\,t}.
	\end{align*}
		A \textit{Markov policy} is a function $\pi: \cS \times [H] \to \simplex(\cA)$ such that $\pi(a|s, h)$ denotes the probability of taking action $a$ in state $s$ at step $h$. 
	A \textit{history-dependent policy} is a family of functions denoted by  $\histpi \eqdef (\pi_h^t)_{t\geq 1, h\in[H]},$ where $\pi_h^t: \histset_h^{\,t} \to \simplex(\cA)$ such that $\pi_h^t(a\,|\,i_h^t)$ denotes the probability of taking action $a$ at time $(t, h)$ after observing the history  $i_h^t \in \histset_h^{\,t}$.
	We denote by $\markovpolicyset$ and $\histpolicyset$ the sets of Markov and history-dependent policies, respectively.

	\paragraph{Probabilistic model} A policy $\histpi$ interacting with an MDP defines a stochastic process denote by $(S_h^t, A_h^t)_{t\geq 1, h\in [H]}$, where $S_h^t$ and $A_h^t$ are the random variables representing the state and the action at time $(t, h)$. As explained by \citep{lattimore2020bandit}, the Ionescu-Tulcea theorem ensures the existence of probability space $(\Omega, \cF, \P_{\mdp})$ such that 
	\begin{align*}
		\PPs{\mdp}{S_1^t = s} = \mu(s),\
		\PPs{\mdp}{S_{h+1}^t = s | A_h^t, \hist_h^t} = p_h(s | S_h^t, A_h^t)
		,\text{ and}\quad 
		\PPs{\mdp}{A_h^t = a | \hist_h^t} = \pi_h^t( a | \hist_h^t ),
	\end{align*}
	where  $\histpi = (\pi_h^t)_{t\geq 1, h\in[H]}$ and for any $(t,h)$,
	\begin{align*}
		\hist_h^t \eqdef \left(S_1^1, A_1^1, S_2^1, A_2^1, \ldots, S_H^1, \ldots, S_1^t, A_1^t, S_2^t, A_2^t, \ldots, S_h^t\right)
	\end{align*}
	is the random vector taking values in $\histset_h^{\,t}$ containing all state-action pairs observed up to step $h$ of episode $t$, but not including $A_h^t$. We denote by $\cF_h^t$ the $\sigma$-algebra generated by $\hist_h^t $. Next, we denote by $\P_{\mdp}^{\hist_H^T}$ the pushforward measure of $\hist_H^T$ under $\P_{\mdp},$
	\begin{align*}
	 \PPsu{\mdp}{\hist_H^T}{i_H^T} 
	  & \eqdef 
	   \PPs{\mdp}{\hist_H^T = i_H^T}
	   =  \prod_{t=1}^T \mu(s_1^t) \prod_{h=1}^{H-1} \pi_h^t(a_h^t|i_h^t) p_h\left(s_{h+1}^t|s_h^t, a_h^t\right),
	\end{align*}
	where $i_h^t \eqdef (s_1^1, a_1^1, \ldots, s_H^1, \ldots, s_1^t, a_1^t, \ldots, s_h^t) \in \histset_h^{\,t}$. Moreover, let $\E_{\mdp}$ be the expectation under~$\P_{\mdp}$. Notice that the dependence of $\P_{\mdp}$ and $\E_{\mdp}$ on the policy $\histpi$ is denoted implicitly, and we denote them explicitly as $\P_{\histpi, \mdp}$ and $\E_{\histpi,\mdp}$ when it is relevant.

	\paragraph{Value function} In an episode $t$, the value of a policy $\histpi$ in the MDP $\mdp$ is defined as
	\begin{align*}
		V^{\histpi, t}(i_H^{t-1}, s) \eqdef \EEs{\histpi, \mdp}{\sum_{h=1}^H r_h(S_h^t, A_h^t) \Big| I_H^{t-1} = i_H^{t-1}, S_1^{\,t}=s},
	\end{align*}
	where $i_H^{t-1}$ are the states and actions observed before episode $t$ and $\histpi$ can be history-dependent. In particular, for a Markov policy~$\pi$, the value does not depend on $i_H^{t-1}$ and we have
	\begin{align*}
		V^{\pi}(s) \eqdef \EEs{\pi, \mdp}{\sum_{h=1}^H r_h(S_h^1, A_h^1) \Big| S_1^1 = s}.
	\end{align*}
	The optimal value function $V^*$ is defined as
		$V^*(s) \eqdef \max_{\pi\in\markovpolicyset} V^{\pi}(s)$ which is achieved by an optimal policy $\pi^*$ that satisfies $V^{\pi^*}(s) = V^*(s)$ for all $s\in\cS$.
	As a consequence of Theorem 5.5.1 of \cite{puterman2014markov}, we have $V^*(s) \geq V^{\histpi, t}(i_H^{t-1}, s)$, which shows that Markov policies are sufficient to achieve an optimal value function.
We also define $\rho^* \eqdef \rho^{\pi^*}$ and the average value functions over the initial state as
	\begin{align*}
	\rho^{\histpi, t}(i_H^{t-1}) \eqdef \EEs{s\sim \mu}{V^{\histpi, t}(i_H^{t-1}, s)}
	,\quad 
	\rho^\pi \eqdef \EEs{s\sim \mu}{V^\pi(s)}.
	\end{align*}
	
	\paragraph{Algorithm} We define a reinforcement-learning algorithm as a history-dependent policy $\histpi$ used to interact with the environment. In the BPI setting, where we eventually stop and recommend a policy, an algorithm is defined as a triple $(\histpi, \tau, \recpolicy_\tau)$ where $\tau$ is a stopping time with respect to the filtration $(\cF_H^t)_{t\geq 1}$, and $\recpolicy_\tau$ is a Markov policy recommended after $\tau$ episodes.

	
	\paragraph{Performance criteria} The performance of RL algorithms has been commonly measured according to its regret or under a Probably Approximately
	Correct (PAC) framework, as defined below.
	\begin{blockquote}
	\begin{definition}\label{def:regret} The expected regret of an algorithm $\histpi$ in an MDP $\mdp$  after $T$ episodes is defined as $$\regret_T(\histpi, \mdp) \eqdef  \EEs{\histpi,\mdp}{\sum_{t=1}^T \pa{\rho^* - \rho^{\histpi, t}(I_H^{t-1})}}.$$
	\end{definition}
	\end{blockquote}
	
	\begin{blockquote}
	\begin{definition}\label{def:pac} An algorithm $\histpi$ is $(\epsilon, \delta)$-PAC for exploration in an MDP $\mdp$ (or PAC-MDP) if there exists a polynomial function $F_\mathrm{PAC}(S, A, H, 1/\epsilon, \log(1/\delta))$ such that its \emph{sample complexity} 
		$$\scPAC_\epsilon \eqdef \sum_{t=1}^\infty \indic{ \rho^* - \rho^{\histpi, t}(I_H^{t-1}) > \epsilon}$$
	satisfies
		 $\PPs{\histpi,\mdp}{\scPAC_\epsilon > F_\mathrm{PAC}(S, A, H, 1/\epsilon, \log(1/\delta))} \leq \delta.$  
	\end{definition}
	\end{blockquote}

	\begin{blockquote}
	\begin{definition} \label{def:bpi} An algorithm $(\histpi, \tau, \recpolicy_\tau)$ is $(\epsilon, \delta)$-PAC for best-policy identification in an MDP~$\mdp$ if the policy $\recpolicy_\tau$ returned after $\tau$ episodes satisfies 
		$$\PPs{\histpi,\mdp}{\rho^{\recpolicy_\tau} \leq \rho^* - \epsilon} \leq \delta.$$
		The \emph{sample complexity} is defined as the number of episodes $\tau$ required for stopping. 
	\end{definition}
	\end{blockquote}

\section{Lower Bound Recipe}
\label{sec:receipe}

In this section, we present the two main ingredients for the proof of our minimax lower bounds. These lower bounds consider a class $\cC$ of hard MDPs instances (on which the optimal policy is difficult to identify), that are typically close to each other, but for which the behavior of an algorithm is expected to be different (because they do not share the same optimal policy). The class $\cC$ used to derive all our results is presented in Section~\ref{sec:hard-mdps}. Then, lower bound proofs use a change of distribution between two well-chosen MDPs in~$\cC$ in order to obtain inequalities on the expected number of visits of certain state-action pairs in one of them. The information-theoretic tools that we use for these changes of distributions are gathered in Section~\ref{sec:CD}.

\subsection{Hard MDP instances} \label{sec:hard-mdps}

From a high-level perspective, the family of MDPs that we use for our proofs behave like multi-armed bandits with $\Theta(HSA)$ arms. To gain some intuition about the construction, assume that $S=4$ and consider the MDP in Figure~\ref{fig:mdp-non-stationary-fixed-s}. The agent starts in a \emph{waiting state} $\swait$ where it can take an action $\await$ to stay in $\swait$ up to a stage $\barH < H$, after which the agent has to leave $\swait$. From $\swait$, the agent can only transition to a state $s_1$, from which it can reach two absorbing states, a ``good'' state $s_g$ and a ``bad'' state $s_b$. The state $s_g$ is the only state where the agent can obtain a reward, which starts to be 1 at stage $\barH+2$. There is a single action $a^*$ in state $s_1$ that increases by $\epsilon$ the probability of arriving to the good state, and this action must be taken at a specific stage $h^*$. The intuition is that, in order to maximize the rewards, the agent must choose the right moment $h\in \braces{1,\ldots, \barH}$ to leave~$\swait$, and then choose the good action $a^*\in\braces{1, \ldots, A}$ in $s_1$. This results in a total of $\barH A$ possible choices, or ``arms'' and the maximal reward is $\Theta(\barH)$. By analogy with the existing minimax regret bound for multi-armed bandits \citep{auer2002nonstochastic,lattimore2020bandit}, the regret lower bound should be $\Omega(H\sqrt{H A T})$, by taking $\barH=\Theta(H)$.

 \begin{figure}[ht!]
	\centering
	\begin{tikzpicture}[->,>=stealth',shorten >=1pt,auto,node distance=2.8cm,
semithick]
\tikzstyle{every state}=[fill=white,text=black]

\node[state]                (W)                  {$s_{\mathrm{w}}$};
\node[state]                (A) [below=1cm of W]    {$s_1$};
\node[state, fill=white]    (B) [below right of=A] {$s_g$};
\node[state, fill=white]    (C) [below left  of=A] {$s_b$};

\node[state, fill=white,draw=none]    (RG) [right=0.25cm of B] {$r_h(s_g, a)= \indic{h \geq \overline{H}+2}$};
\node[state, fill=white,draw=none]    (RB) [left=0.25cm of C] {$r_h(s_b, a)=0$};

\path  
	(W)  edge node{action  $\neq a_{\mathrm{w}}$} (A)
	(W)  edge [loop above] node{action = $a_{\mathrm{w}}$} (W)  
	(A)  edge[blue]   node{$\textcolor{blue}{\frac{1}{2}+\epsilon'} $} (B)
	(A)  edge[bend right,dashed]   node[below]{$\frac{1}{2}$} (B)
	(A)  edge[bend left,dashed]   node[below]{$\frac{1}{2}$} (C)
	(A)  edge[blue]   node[above left]{$\textcolor{blue}{\frac{1}{2}-\epsilon'}$} (C)
	(B)  edge [loop below] node{1} (B)  
	(C)  edge [loop below] node{1} (C)    
	;

\end{tikzpicture}

%
%
	\caption{Illustration of the class of hard MDPs for $S=4$.}
	\label{fig:mdp-non-stationary-fixed-s}
\end{figure}
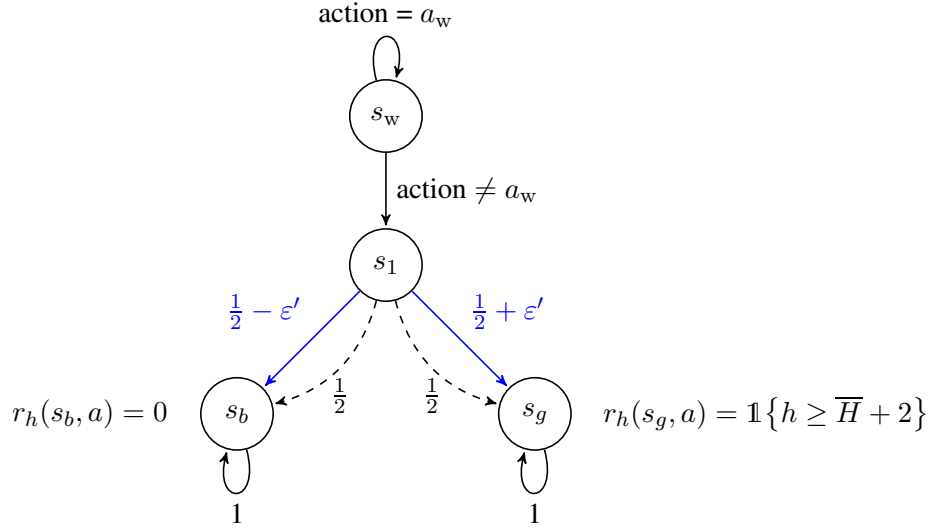
\noindent
Inspired by the tree construction of \cite{lattimore2020bandit} for the lower bound in the average-reward setting, we now generalize these MDPs to $S>4$.
 Consider a family of MDPs described as follows and illustrated in Figure \ref{fig:mdp-non-stationary}. First, we state the following assumption, which we relax in Appendix~\ref{sec:relax-assumption}.

\begin{assumption}
	\label{assumption:mdp-structure}
	The number of states and actions satisfy $S \geq 6$, $A \geq 2$, and there exists an integer~$d$ such that $S = 3 + (A^d-1)/(A-1)$, which implies $d= \Theta(\log_A S)$. We further assume that $H \geq 3 d$.
\end{assumption}
 As in the previous case, there are three special states: a ``waiting'' state $\swait$ where the agent starts and can choose to stay up to a stage $\barH$, a ``good'' state $s_g$ that is absorbing and is the only state where the agent obtains rewards, and a ``bad'' state $s_b$ that is absorbing and gives no reward. The other $S-3$ states are arranged in a full $A$-ary tree of depth $d-1$, which can be done since we assume there exists an integer $d$ such that $S-3 = \sum_{i=0}^{d-1}A^i$. The root of the tree is denoted by $\sroot$, which can only be reached from $\swait$, and the states $s_g$ and $s_b$ can only be reached from the leaves of the tree.

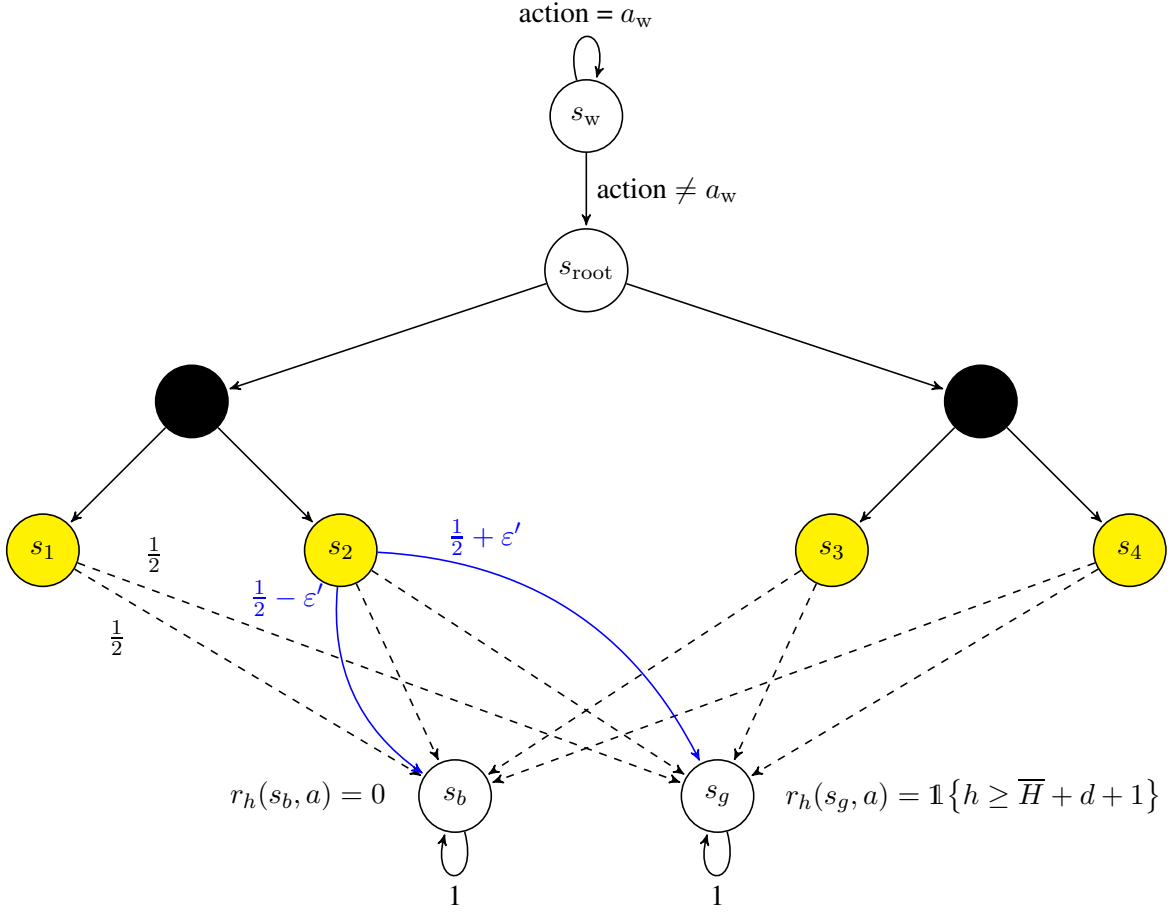
\begin{figure}[ht]
	\centering

\begin{tikzpicture}[->,>=stealth',shorten >=1pt,auto,node distance=2.8cm,
semithick]
\tikzstyle{every state}=[fill=white,text=black]

\node[state]                (W)                      {$s_{\mathrm{w}}$};
\node[state]                (A) [below=1cm of W]     {$s_\mathrm{root}$};


\node[state,fill=black]                (N1)[below left=1cm and 4.5cm of A]    {}; 
\node[state,fill=black]                (N2)[below right=1cm  and 4.5cm of A]    {}; 

\path 
	(A) edge (N1)
	(A) edge (N2);

\node[state,fill=yellow]     (N3)[below  left of=N1]    {$s_1$}; 
\node[state,fill=yellow]     (N4)[below  right of=N1]    {$s_2$}; 

\path 
	(N1) edge (N3)
	(N1) edge (N4);

\node[state,fill=yellow]     (N5)[below  left  of=N2]    {$s_3$}; 
\node[state,fill=yellow]     (N6)[below  right of=N2]    {$s_4$}; 

\path 
	(N2) edge (N5)
	(N2) edge (N6);

\path  
	(W)  edge node{action  $\neq a_{\mathrm{w}}$} (A)
	(W)  edge [loop above] node{action = $a_{\mathrm{w}}$} (W)  
	;
	
\node[state, fill=white]    (GOOD) [below right = 6.25cm and 1cm of A] {$s_g$};
\node[state, fill=white]    (BAD) [below left  = 6.25cm and 1cm of A] {$s_b$};

\node[state, fill=white,draw=none]    (RG) [right=0.25cm of GOOD] {$r_h(s_g, a) =\indic{h \geq \overline{H}+d+1}$};
\node[state, fill=white,draw=none]    (RB) [left=0.25cm of BAD] {$r_h(s_b, a)=0$};

\path
	(N3)  edge[dashed]   node[above left=1.2cm and 2.75cm]{$\frac{1}{2}$} (GOOD)
	(N3)  edge[dashed]   node[above left=0.1cm and 1.5cm]{$\frac{1}{2}$} (BAD)
	
	(N4)  edge[dashed]   node[below]{} (GOOD)
	(N4)  edge[dashed]   node[below]{} (BAD)
	
	(N5)  edge[dashed]   node[below]{} (GOOD)
	(N5)  edge[dashed]   node[below]{} (BAD)
	
	(N6)  edge[dashed]   node[below]{} (GOOD)
	(N6)  edge[dashed]   node[below]{} (BAD)
	;

\path 
	(N4)  edge[blue, bend left]   node[above left=0.6cm and 0.5cm]{$\textcolor{blue}{\frac{1}{2}+\epsilon'} $} (GOOD)
	(N4)  edge[blue, bend right]   node[above left=0.9cm and 0.25cm]{$\textcolor{blue}{\frac{1}{2}-\epsilon'} $} (BAD)
	(GOOD)  edge [loop below] node{1} (GOOD)  
	(BAD)  edge [loop below] node{1}  (BAD)    
	;
\end{tikzpicture}
	\caption{Illustration of the class of hard MDPs used in the proofs of Theorems \ref{theorem:bpi-non-stationary} and \ref{theorem:regret-non-stationary}.}
	\label{fig:mdp-non-stationary}
\end{figure}\noindent
Let $\barH \leq H-d$ be an integer that will be a parameter of the class of MDPs. Letting $\cL = \braces{s_1, s_2, \ldots, s_L}$ be the set of $L$ leaves of the tree, we define for each
\[(h^*, \ell^*, a^*) \in \braces{1+d,\ldots, \barH+d}\times \cL \times\cA,\]
an MDP $\mdp_{(h^*,\ell^*,a^*)}$ as follows.
For any state in the tree, the transitions are deterministic: the $a$-th action in a node leads to the $a$-th child of that node.
The transitions from $\swait$ are given by
\begin{align*}
& p_h(\swait|\swait, a) \eqdef \indic{a =  \await, h \leq \barH}
\quad\text{and}\quad
p_h(\sroot|\swait, a) \eqdef 1 -  p_h(\swait|\swait, a).
\end{align*}
That is, there is an action $\await$ that allows the agent to stay at $\swait$ up to a stage $\barH$. After stage $\barH$, the agent has to traverse the tree down to the leaves. The transitions from any leaf $s_i \in \cL$ are given by
\begin{align}
\label{eq:transition-probs}
& p_h(s_g|s_i, a) \eqdef \frac{1}{2} +  \Delta_{(h^*, \ell^*, a^*)}(h, s_i, a)
\quad\text{and}\quad
p_h(s_b|s_i, a) \eqdef \frac{1}{2} -  \Delta_{(h^*, \ell^*, a^*)}(h, s_i, a),
\end{align}
where $\Delta_{(h^*, \ell^*, a^*)}(h, s_i, a) \eqdef \indic{(h, s_i, a)=(h^*, s_{\ell^*}, a^*)}\cdot \epsilon'$, for some $\epsilon' \in [0,1/2]$ that is the second parameter of the class. This means that there is a single leaf $\ell^*$ where the agent can choose an action $a^*$ at stage $h^*$ that increases the probability of arriving to the good state $s_g$. Finally, the states $s_g$ and $s_b$ are absorbing, that is, for any action $a$, we have $p_h(s_b|s_b, a) \eqdef p_h(s_g|s_g, a) \eqdef 1$. The reward function depends only on the state and is defined as
\begin{align*}
 \forall a\in\cA, \quad r_h(s,a) \eqdef \indic{s = s_g, h \geq \barH+d+1}
\end{align*}
so that the agent does not miss any reward if it chooses to stay at $\swait$ until stage $\barH$.

We further define a reference MDP $\mdp_0$ which is an MDP of the above type but for which $\Delta_{0}(h, s_i, a)\eqdef0$ for all $(h,s_i,a)$. For every $\varepsilon'$ and $\barH$, we define the class $\cC_{\barH,\varepsilon'}$ to be the set
\[\cC_{\barH,\varepsilon'} \eqdef \left\{\mdp_0\right\} \bigcup \left\{\mdp_{(h^*,\ell^*,a^*)}\right\}_{(h^*,\ell^*,a^*) \in \braces{1+d,\ldots, \barH+d}\times \cL \times\cA}\;.\]

	\subsection{Change of Distribution Tools}\label{sec:CD}

	\begin{definition}
		The Kullback-Leibler divergence between two distributions $\P_1$ and $\P_2$ on a measurable space $(\Omega, \cG)$ is defined as
		\begin{align*}
			\KL(\P_1, \P_2) \eqdef \int_\Omega \log\pa{\frac{\rmd\P_1}{\rmd\P_2}(\omega)}\rmd\P_1(\omega),
		\end{align*}
		if $\P_1 \ll \P_2$ and $+\infty$ otherwise. For Bernoulli distributions, we define $	\forall (p,q)\in[0,1]^2,$
		\begin{align*}
			\kl(p, q) \eqdef \KL\big(\cB(p),\cB(q)\big) =p \log\pa{\frac{p}{q}} + (1-p)\log\pa{\frac{1-p}{1-q}}\cdot
		\end{align*}
	\end{definition}

%

	\begin{lemma}[proof in Appendix~\ref{sec:kl-between-mdps-with-stopping-time}]
		\label{lemma:kl-between-mdps-with-stopping-time}
		Let $\mdp$ and $\mdp'$ be two MDPs that are identical except for their transition probabilities, denoted by $p_h$ and $p_h'$, respectively. Assume that  we have $\forall (s, a)$, $p_h(\cdot|s,a) \ll p_h'(\cdot|s,a)$. Then, for any stopping time $\tau$ with respect to $(\cF_H^t)_{t\geq 1}$ that satisfies $\PPs{\mdp}{\tau < \infty} =1$,
		\begin{align}
		\label{eq:kl-between-mdps-stopping-time-aux}
		\KL\pa{\P_{\mdp}^{\hist_H^\tau}, \P_{\mdp'}^{\hist_H^\tau}} = \sum_{s \in \cS}\sum_{a \in \cA}\sum_{h\in[H-1]}
		\EEsu{\mdp}{}{ N_{h,s,a}^\tau } \KL\pa{ p_h(\cdot|s,a), p_h'(\cdot|s,a)},
		\end{align}
		where $N_{h,s,a}^\tau \eqdef \sum_{t=1}^\tau \indic{(S_h^t, A_h^t) = (s,a)}$ and $\hist_H^\tau: \Omega \to \bigcup_{t\geq 1} \histset_H^t: \omega \mapsto I_H^{\tau(\omega)}(\omega) $ is the random vector representing the history up to episode $\tau$.
	\end{lemma}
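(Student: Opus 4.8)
The plan is to prove the identity in two stages. First I would establish, for an arbitrary a.s.\ finite stopping time, the \emph{exact} identity $\KL\pa{\P_{\mdp}^{\hist_H^\tau},\P_{\mdp'}^{\hist_H^\tau}} = \EEs{\mdp}{\sum_{t=1}^\tau D_t}$, where $D_t \eqdef \sum_{h=1}^{H-1}\log\big(p_h(S_{h+1}^t|S_h^t,A_h^t)/p_h'(S_{h+1}^t|S_h^t,A_h^t)\big)$ is the per-episode log-likelihood ratio of the two transition models. Second, I would show that the expected truncated sum of the $D_t$ equals the expected truncated sum of the ``compensators'' $K_t \eqdef \sum_{h=1}^{H-1}\KL\pa{p_h(\cdot|S_h^t,A_h^t),p_h'(\cdot|S_h^t,A_h^t)}$, after which regrouping the visits by $(h,s,a)$ yields the stated right-hand side.

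For the first stage I would start from the factorization of $\P_{\mdp}^{\hist_H^T}$ given above. Since $\mdp$ and $\mdp'$ share the same initial distribution $\mu$, the same policy $\histpi$, and differ only in their transitions, the factors $\mu(s_1^t)$ and $\pi_h^t(a_h^t|i_h^t)$ cancel in the likelihood ratio, leaving $\prod_{t,h} p_h/p_h'$; the assumption $p_h(\cdot|s,a)\ll p_h'(\cdot|s,a)$ guarantees this ratio is well defined and that $\P_{\mdp}^{\hist_H^T}\ll\P_{\mdp'}^{\hist_H^T}$. To pass from a deterministic horizon to $\tau$, I would use that $\{\tau=n\}\in\cF_H^n=\sigma(\hist_H^n)$, so on $\{\tau = n\}$ the law of $\hist_H^\tau$ agrees with that of $\hist_H^n$ and the same measurable stopping rule applies to both MDPs; hence $\P_{\mdp}^{\hist_H^\tau}\ll\P_{\mdp'}^{\hist_H^\tau}$ with Radon--Nikodym derivative $\prod_{t=1}^\tau\prod_{h=1}^{H-1}p_h/p_h'$. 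Taking logarithms and integrating against $\P_{\mdp}$ gives the claimed exact identity, the expectation being well defined in $[0,\infty]$ because the negative part of $\log\pa{\rmd\P_{\mdp}^{\hist_H^\tau}/\rmd\P_{\mdp'}^{\hist_H^\tau}}$ is always $\P_{\mdp}$-integrable.

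For the second stage, the key observation is that conditionally on $(S_h^t,A_h^t)=(s,a)$ the next state is drawn from $p_h(\cdot|s,a)$ under $\P_{\mdp}$, so $\EEs{\mdp}{\log\big(p_h(S_{h+1}^t|s,a)/p_h'(S_{h+1}^t|s,a)\big)\mid \cF_h^t, A_h^t} = \KL\pa{p_h(\cdot|s,a),p_h'(\cdot|s,a)}$; summing over $h$ and using the tower rule shows $\EEs{\mdp}{D_t - K_t\mid\cF_H^{t-1}}=0$, i.e.\ $M_n\eqdef\sum_{t=1}^n(D_t-K_t)$ is an $(\cF_H^n)$-martingale, with integrable increments since $\cS,\cA$ are finite and every $\KL\pa{p_h(\cdot|s,a),p_h'(\cdot|s,a)}$ is finite. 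I would then work with the bounded stopping times $\tau\wedge n$: optional stopping gives $\EEs{\mdp}{\sum_{t=1}^{\tau\wedge n}D_t}=\EEs{\mdp}{\sum_{t=1}^{\tau\wedge n}K_t}$ for every $n$. Since $K_t\ge 0$, the right-hand side increases to $\EEs{\mdp}{\sum_{t=1}^\tau K_t}$ by monotone convergence, which after regrouping equals $\sum_{s,a,h}\EEs{\mdp}{N_{h,s,a}^\tau}\KL\pa{p_h(\cdot|s,a),p_h'(\cdot|s,a)}$ (Tonelli, nonnegative terms); the left-hand side equals $\KL\pa{\P_{\mdp}^{\hist_H^{\tau\wedge n}},\P_{\mdp'}^{\hist_H^{\tau\wedge n}}}$ by the first stage, and this increases to $\KL\pa{\P_{\mdp}^{\hist_H^{\tau}},\P_{\mdp'}^{\hist_H^{\tau}}}$ because the $\sigma$-algebras $\sigma(\hist_H^{\tau\wedge n})$ increase to $\sigma(\hist_H^\tau)$ (as $\tau<\infty$ a.s.) and relative entropy is continuous from below along increasing $\sigma$-algebras.

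The main obstacle is precisely this passage to the limit when $\tau$ is only assumed a.s.\ finite rather than integrable: $D_t$ is not sign-definite, so one cannot directly dominate or monotonically control $\sum_{t\le\tau\wedge n}D_t$, and a naive Wald identity would require $\EEs{\mdp}{\tau}<\infty$. Routing the left-hand limit through the monotone continuity of $\KL$ along the filtration $\sigma(\hist_H^{\tau\wedge n})\uparrow\sigma(\hist_H^\tau)$ (equivalently, martingale convergence of the likelihood ratios) sidesteps this and makes the identity hold verbatim in $[0,\infty]$, covering the case where both sides are infinite. The remaining verifications---integrability of the martingale increments, cancellation of the policy factors, and measurability of the common stopping rule---are routine given the finiteness of $\cS$ and $\cA$ and the absolute-continuity assumption.
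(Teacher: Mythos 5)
Your proposal is correct and follows the same overall route as the paper's proof: factor the pushforward law of $\hist_H^\tau$, use that $\braces{\tau=T}$ is $\sigma(\hist_H^T)$-measurable so that the initial-distribution, policy, and stopping-rule factors cancel in the likelihood ratio, reduce the KL to $\EEs{\mdp}{\sum_{t=1}^\tau D_t}$ with $D_t$ the per-episode log-likelihood ratio of the transitions, and then replace each increment by its conditional expectation (the per-visit KL) via a martingale argument and the Markov property. Where you genuinely depart from the paper is the optional-stopping step. The paper invokes Lemma~\ref{lemma:condition-in-random-sum}, which asserts $\EE{\sum_{t=1}^\tau X_t}=\EE{\sum_{t=1}^\tau \EE{X_t\mid\cF_{t-1}}}$ for bounded increments and an a.s.\ finite $\tau$, justified by a bare appeal to Doob's optional stopping theorem; as stated those hypotheses do not suffice (a simple random walk stopped at its first passage to $1$ is a counterexample), so the paper's argument implicitly needs $\EEs{\mdp}{\tau}<\infty$ or an additional limiting step. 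You instead apply optional stopping only at the bounded times $\tau\wedge n$, where it is unconditional, send the compensator side to its limit by monotone convergence (using $K_t\geq 0$), and send the KL side to its limit via the monotone continuity of relative entropy along $\sigma(\hist_H^{\tau\wedge n})\uparrow\sigma(\hist_H^\tau)$. This buys you the identity verbatim in $[0,+\infty]$, covering the case $\EEs{\mdp}{\tau}=\infty$, at the cost of invoking one standard but nontrivial fact (continuity of $\KL$ from below along increasing $\sigma$-algebras, equivalently martingale convergence of the likelihood ratios) that you should state or cite explicitly; the paper's route is shorter but rests on an auxiliary lemma whose stated hypotheses are too weak.
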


	\begin{lemma}[Lemma 1, \citealp{garivier2019explore}]
		\label{lemma:kl-contraction}
		Consider a measurable space $(\Omega, \cF)$ equipped with two distributions $\P_1$ and $\P_2$. For any $\cF$-measurable function $Z: \Omega\to[0, 1]$, we have
		\begin{align*}
			\KL(\P_1, \P_2) \geq \kl(\E_1[Z], \E_2[Z]),
		\end{align*}
		where $\E_1$ and $\E_2$ are the expectations under $\P_1$ and $\P_2$ respectively.
	\end{lemma}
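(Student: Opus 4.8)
The plan is to obtain this inequality as an instance of the data-processing (contraction) property of the Kullback--Leibler divergence: pushing two measures through the same channel can only decrease their KL, and a well-chosen binary channel turns the pair $(\P_1, \P_2)$ into the pair of Bernoulli laws appearing on the right-hand side. If $\P_1 \not\ll \P_2$ then $\KL(\P_1, \P_2) = +\infty$ and the bound is trivial, so I would assume throughout that $\P_1 \ll \P_2$.

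First I would convert the real-valued statistic $Z$ into a binary observation. Since $Z$ takes values in $[0,1]$, one may define a Markov kernel $K$ from $(\Omega, \cF)$ to $\braces{0,1}$ by
\begin{align*}
K(\omega, \braces{1}) \eqdef Z(\omega), \qquad K(\omega, \braces{0}) \eqdef 1 - Z(\omega).
\end{align*}
Let $\nu_i$ denote the pushforward of $\P_i$ through $K$, a distribution on $\braces{0,1}$. Then
\begin{align*}
\nu_i(\braces{1}) = \int_\Omega Z \,\rmd\P_i = \E_i[Z],
\end{align*}
so that $\nu_i = \cB\pa{\E_i[Z]}$ for $i \in \braces{1,2}$.

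Second, I would invoke the contraction of KL under Markov kernels, $\KL(\nu_1, \nu_2) \leq \KL(\P_1, \P_2)$, and combine it with the identity above to conclude
\begin{align*}
\KL(\P_1, \P_2) \geq \KL\pa{\cB(\E_1[Z]), \cB(\E_2[Z])} = \kl(\E_1[Z], \E_2[Z]).
\end{align*}
The only genuine content is the contraction step, so that is where I would be most careful. If I did not wish to cite it as a black box, I would prove it from the joint convexity of $(x,y) \mapsto x \log(x/y)$: writing $f = \rmd\P_1/\rmd\P_2$, the density of $\nu_1$ with respect to $\nu_2$ at each of the two atoms is a $\P_2$-conditional average of $f$, and Jensen's inequality applied to the convex map $t \mapsto t \log t$ yields the inequality atom by atom. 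The degenerate cases $\E_2[Z] \in \braces{0,1}$, where one Bernoulli law is a point mass and $\kl$ could a priori be $+\infty$, are in fact harmless under $\P_1 \ll \P_2$: for instance $\E_2[Z] = 0$ forces $Z = 0$ almost surely under $\P_2$, hence (by absolute continuity) under $\P_1$, so $\E_1[Z] = 0$ and both sides reduce to $0$; the case $\E_2[Z] = 1$ is symmetric via $1 - Z$.
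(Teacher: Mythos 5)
Your proof is correct, and it is essentially the argument used in the cited source (Garivier et al., 2019, Lemma 1), which the paper itself invokes without reproving: augment the space with a Bernoulli observation whose conditional success probability is $Z$, then apply the data-processing inequality for KL. Your handling of the degenerate cases $\E_2[Z]\in\braces{0,1}$ and the reduction of the contraction step to joint convexity are both sound.
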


\section{Sample Complexity Lower Bounds}

We are ready to state a new minimax lower bound on the sample complexity of best policy identification (see Definition~\ref{def:bpi}), in an MDP with stage-dependent transitions. We note that unlike existing sample complexity lower bounds which also construct ``bandit-like'' hard instances \citep{Strehl09,lattimore2012pac,dann2015sample}, we do not refer to the bandit lower bound of \cite{mannor2004sample}, but instead use explicit change of distribution arguments based on the tools given in Section~\ref{sec:CD}. This allows us to provide BPI lower bounds for algorithms that output randomized policies and to have a self-contained proof.
As a consequence of this result, we then easily derive a PAC-MDP (see Definition~\ref{def:pac}) lower bound in Corollary~\ref{corollary:pac}, which is proved in Appendix~\ref{sec:pac-proof}.

\begin{blockquote}
\begin{theorem}
	\label{theorem:bpi-non-stationary}
	 Let $(\histpi, \tau, \recpolicy_\tau)$ be an algorithm that is $(\epsilon, \delta)$-PAC for best policy identification in any finite episodic MDP. Then, under Assumption \ref{assumption:mdp-structure}, there exists an MDP $\mdp$ with stage-dependent transitions such that  for $\epsilon \leq H/24$, $H \geq 4$ and $\delta \leq 1/16$,
	\[\EEs{\histpi, \mdp}{\tau} \geq \frac{1}{3456}\frac{H^3SA}{\epsilon^2}\log\pa{\frac{1}{\delta}}\cdot\]
\end{theorem}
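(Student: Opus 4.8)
The plan is to view the class $\cC_{\barH,\epsilon'}$ as a bandit problem with $K \eqdef \barH\cdot L\cdot A$ ``arms'', one per triple $(h^*,\ell^*,a^*)$, where $L=|\cL|=A^{d-1}$ is the number of leaves, and to lower bound the stopping time on the reference instance $\mdp_0$ via a change of distribution against each single-arm alternative $\mdp_{(h^*,\ell^*,a^*)}$. Since $\mdp_0$ itself has stage-dependent transitions, exhibiting the bound for $\E_{\histpi,\mdp_0}[\tau]$ already proves the theorem. I first fix parameters: I take $\barH\approx (H-d)/3$, which (using $H\ge 3d$, hence $H-d\ge 2H/3$, and $\barH\le H-d$) keeps both the number $\barH$ of admissible leaf-arrival stages $h^*\in\braces{1+d,\ldots,\barH+d}$ and the number $H-\barH-d$ of reward-collecting stages of order $\Theta(H)$; I then choose $\epsilon'$ so that the per-episode value gap satisfies $(H-\barH-d)\,\epsilon'=4\epsilon$. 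The hypotheses $\epsilon\le H/24$ and $H\ge 4$ guarantee $\epsilon'\le 1/2$, so the transitions of \eqref{eq:transition-probs} stay valid.

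Next I compute the gap. Because navigation through the tree is deterministic and identical in every instance, a recommended (possibly randomized) Markov policy $\recpolicy$ reaches each triple $(h,s_\ell,a)$ with a probability $p_{(h,\ell,a)}(\recpolicy)$ depending only on $\recpolicy$, and these probabilities sum to $1$ (every episode visits exactly one leaf). In $\mdp_{(h^*,\ell^*,a^*)}$ only the distinguished triple raises the chance of reaching $s_g$ by $\epsilon'$, so from $\swait$ we get $\rho^{\recpolicy}=(H-\barH-d)\big(\tfrac12+\epsilon'\,p_{(h^*,\ell^*,a^*)}(\recpolicy)\big)$ while $\rho^*=(H-\barH-d)(\tfrac12+\epsilon')$. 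Hence $\recpolicy$ can be $\epsilon$-optimal in $\mdp_{(h^*,\ell^*,a^*)}$ only if $p_{(h^*,\ell^*,a^*)}(\recpolicy)> 1-\epsilon/(4\epsilon)=3/4$; since these probabilities sum to one, \emph{at most one} arm can satisfy this. Setting $Z_{(h^*,\ell^*,a^*)}\eqdef\indic{\recpolicy_\tau \text{ is }\epsilon\text{-optimal in }\mdp_{(h^*,\ell^*,a^*)}}$, I therefore obtain $\sum Z_{(h^*,\ell^*,a^*)}\le 1$ pointwise, while the PAC guarantee applied to each $\mdp_{(h^*,\ell^*,a^*)}\in\cC$ gives $\E_{\mdp_{(h^*,\ell^*,a^*)}}[Z_{(h^*,\ell^*,a^*)}]\ge 1-\delta$.

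The information-theoretic core then combines the two lemmas. As $\mdp_0$ and $\mdp_{(h^*,\ell^*,a^*)}$ differ only in the transition of the single triple $(h^*,s_{\ell^*},a^*)$, Lemma~\ref{lemma:kl-between-mdps-with-stopping-time} gives $\KL(\P_{\mdp_0}^{\hist_H^\tau},\P_{\mdp_{(h^*,\ell^*,a^*)}}^{\hist_H^\tau})=\E_{\mdp_0}[N_{h^*,s_{\ell^*},a^*}^\tau]\,\kl(\tfrac12,\tfrac12+\epsilon')$, and Lemma~\ref{lemma:kl-contraction} applied with $Z_{(h^*,\ell^*,a^*)}\in\{0,1\}$ lower-bounds this by $\kl(\E_{\mdp_0}[Z_{(h^*,\ell^*,a^*)}],\,1-\delta)$. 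Summing over all $K$ arms, and using both $\sum_{(h^*,\ell^*,a^*)}N_{h^*,s_{\ell^*},a^*}^\tau=\tau$ (one leaf visit per episode) and $\sum_{(h^*,\ell^*,a^*)}\E_{\mdp_0}[Z_{(h^*,\ell^*,a^*)}]\le 1$, together with the elementary bound $\kl(x,1-\delta)\ge(1-x)\log(1/\delta)-\log 2$, I obtain
\begin{align*}
\E_{\mdp_0}[\tau]\,\kl\big(\tfrac12,\tfrac12+\epsilon'\big)\;\ge\;(K-1)\log\tfrac1\delta-K\log 2\;\ge\;\tfrac{K}{4}\log\tfrac1\delta,
\end{align*}
where the last step uses $\delta\le 1/16$ (so $\log(1/\delta)\ge 4\log 2$) and $K\ge 2$.

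Finally I substitute parameters. Bounding $\kl(\tfrac12,\tfrac12+\epsilon')\le C(\epsilon')^2$ with $\epsilon'=4\epsilon/(H-\barH-d)$ makes the right-hand side proportional to $\barH\,(H-\barH-d)^2\,LA\,\log(1/\delta)/\epsilon^2$; the choice $\barH\approx(H-d)/3$ maximizes $x\mapsto x(H-d-x)^2$, and with $LA=A^d\ge S-2\ge\tfrac23 S$ (from $S-3=(A^d-1)/(A-1)$ and $A\ge 2$) and $H-d\ge 2H/3$ this yields the announced rate, the constant $\tfrac1{3456}$ emerging after collecting the numerical factors. The step I expect to be most delicate is not any single estimate but the \emph{choice of the test statistic}: the naive choice $Z_i=p_{(h^*,\ell^*,a^*)}(\recpolicy_\tau)$ leaves $\E_{\mdp_i}[Z_i]$ bounded only by a constant and loses the $\log(1/\delta)$ factor, whereas the $\{0,1\}$-valued correctness indicator recovers it precisely because the inter-arm gap forces ``$\epsilon$-optimal'' to hold for at most one arm, thereby converting the $\Theta(HSA)$ arms into the factor $K\log(1/\delta)$. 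Balancing $\barH$ against the reward horizon $H-\barH-d$ to produce the cubic dependence on $H$ is the other point requiring care.
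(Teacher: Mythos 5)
Your proposal follows essentially the same route as the paper's proof: the same class $\cC_{\barH,\epsilon'}$ with the reference instance $\mdp_0$, the same reduction of $\epsilon$-optimality of $\recpolicy_\tau$ in $\mdp_{(h^*,\ell^*,a^*)}$ to the event that the reaching probability of the distinguished triple exceeds a constant threshold, the same observation that these probabilities sum to one so at most one instance can be ``declared good,'' and the same chain Lemma~\ref{lemma:kl-between-mdps-with-stopping-time} $\to$ Lemma~\ref{lemma:kl-contraction} $\to$ Lemma~\ref{lemma:kl-delta}, summed over instances using $\sum_{(h^*,\ell^*,a^*)}N^\tau_{h^*,s_{\ell^*},a^*}=\tau$. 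The choice of the $\{0,1\}$-valued correctness indicator as the test statistic, which you single out as the delicate point, is exactly the paper's choice of $Z=\indic{\cE^\tau_{(h^*,\ell^*,a^*)}}$.

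There is, however, a concrete error in your final substitution. You bound $LA=A^d\ge S-2\ge\tfrac23 S$ and claim this yields the announced rate. It does not: substituting $LA\gtrsim S$ into $K=\barH LA$ produces a lower bound of order $H^3S\log(1/\delta)/\epsilon^2$, with the factor $A$ lost. The correct step is to lower bound the number of leaves alone, $L=A^{d-1}=(1-1/A)(S-3)+1/A\ge S/4$ for $A\ge 2$, $S\ge 6$, so that $K=\barH L A\ge \barH\,\tfrac{SA}{4}$ and the factor $A$ survives into the final bound. Two secondary points on constants: (a) with your normalization $(H-\barH-d)\epsilon'=4\epsilon$ and the hypothesis $\epsilon\le H/24$ you only get $\epsilon'\le 3/8$, not $\epsilon'\le 1/4$, so Lemma~\ref{lemma:kl-epsilon} does not apply as stated and your unspecified constant $C$ in $\kl(\tfrac12,\tfrac12+\epsilon')\le C(\epsilon')^2$ must be tracked; (b) the factor $4\epsilon$ (versus the paper's $2\epsilon$) together with your weaker $\tfrac{K}{4}$ (versus $\tfrac{K}{2}$) degrades the constant by a factor of $8$, so the numerical factors do not collect to $\tfrac{1}{3456}$ as claimed; the paper's choices $\tepsilon=2\epsilon/(H-\barH-d)$ and $\barH=H/3$ are what produce that constant. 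Both issues are repairable without changing the structure of the argument.
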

\end{blockquote}

\begin{blockquote}
	\begin{corollary}
		\label{corollary:pac}
		Let $\histpi$ be an algorithm that is $(\epsilon,\delta)$-PAC for exploration according to Definition~\ref{def:pac} and that, in each episode $t$, plays a deterministic policy $\pi_t$. Then, under the assumptions of Theorem~\ref{theorem:bpi-non-stationary}, there exists an MDP $\mdp$ such that 
		\begin{align*}
		\PPs{\histpi,\mdp}{\scPAC_\epsilon > \frac{1}{6912}\frac{H^3SA}{\epsilon^2}\log\pa{\frac{1}{\delta}}-1} > \delta.
		\end{align*}
	\end{corollary}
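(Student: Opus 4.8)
\section*{Proof proposal for Corollary~\ref{corollary:pac}}

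The plan is to argue by contradiction, turning the hypothetical PAC-MDP algorithm into a best-policy-identification algorithm to which Theorem~\ref{theorem:bpi-non-stationary} applies, and to calibrate the reduction so tightly that its output falls \emph{one unit below} the BPI lower bound. Write $N \eqdef \tfrac{1}{6912}\tfrac{H^3SA}{\epsilon^2}\log(1/\delta)-1$, and suppose toward a contradiction that the statement fails, so that $\PPs{\histpi,\mdp}{\scPAC_\epsilon > N}\leq \delta$ for \emph{every} MDP, in particular for every $\mdp\in\cC_{\barH,\epsilon'}$. I would first calibrate the class: pick $\epsilon'$ so that the value gap between the optimal policy and any policy missing the distinguished triple $(h^*,\ell^*,a^*)$ equals exactly $2\epsilon$. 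This is possible because that gap is $\epsilon'\,(H-\barH-d)$, a linear function of $\epsilon'$. Since every \emph{deterministic} policy traverses the tree deterministically and therefore commits to a single triple (stage, leaf, action), on each $\mdp_{(h^*,\ell^*,a^*)}$ one has $\rho^{*}-\rho^{\pi_t}\in\{0,\,2\epsilon\}$. As $2\epsilon>\epsilon$, the mistake indicator $\indic{\rho^{*}-\rho^{\pi_t}>\epsilon}$ coincides exactly with the event that $\pi_t$ misses the distinguished triple, which is why the deterministic-policy hypothesis in the statement is essential.

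Next I would define the BPI algorithm: run $\histpi$ for $K\eqdef 2N+1$ episodes, so that $\tau=K$ is deterministic and $\EEs{\histpi,\mdp}{\tau}=2N+1$, and recommend the \emph{uniform mixture} $\bar\pi$ of the played deterministic policies $\pi_1,\dots,\pi_K$, whose value is the average $\rho^{\bar\pi}=\tfrac1K\sum_{t=1}^K \rho^{\pi_t}$. The single most important design choice here is to recommend the value-averaging mixture rather than a uniformly \emph{sampled} single $\pi_t$: the latter would have value $\rho^{\pi_J}$ for a random index $J$, whose failure probability is the \emph{average} mistake frequency, only $\approx 1/2$ on the good event, and this would destroy the $\log(1/\delta)$ scaling. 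Recommending $\bar\pi$ is legitimate because the framework permits randomized recommended policies, and by the occupancy-measure correspondence there is a genuine policy in $\markovpolicyset$ achieving the same value $\tfrac1K\sum_t\rho^{\pi_t}$. On the good event $\{\scPAC_\epsilon\leq N\}$, which has probability at least $1-\delta$, at most $N$ of the first $K$ policies are suboptimal, each contributing a deficit of exactly $2\epsilon$, so $\rho^{*}-\rho^{\bar\pi}\leq \tfrac{2\epsilon N}{2N+1}<\epsilon$. Hence $\bar\pi$ is $\epsilon$-optimal off the bad event, and the reduction is $(\epsilon,\delta)$-PAC for best-policy identification on $\cC_{\barH,\epsilon'}$.

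Finally I would invoke Theorem~\ref{theorem:bpi-non-stationary}, whose construction only probes the behaviour of the algorithm on the hard class $\cC_{\barH,\epsilon'}$ that it builds; since our reduction is $(\epsilon,\delta)$-PAC on exactly that class (with the matching $\epsilon'$), its conclusion applies verbatim, yielding an $\mdp$ with $\EEs{\histpi,\mdp}{\tau}\geq \tfrac{1}{3456}\tfrac{H^3SA}{\epsilon^2}\log(1/\delta)$. But $\tfrac{1}{3456}=2\cdot\tfrac{1}{6912}$, so the right-hand side equals $2(N+1)=2N+2$, whereas $\EEs{\histpi,\mdp}{\tau}=2N+1$; thus $2N+1\geq 2N+2$, a contradiction. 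The side conditions of Theorem~\ref{theorem:bpi-non-stationary} ($\epsilon\leq H/24$, $H\geq 4$, $\delta\leq 1/16$) are inherited, and $\epsilon'\leq 1/2$ holds for the calibrated gap under Assumption~\ref{assumption:mdp-structure}; these verifications are routine.

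The main obstacle is precisely the recommended-policy construction together with the gap calibration: one must (i) recommend the averaging mixture and justify its value via the randomized/occupancy-measure viewpoint, so that the BPI confidence stays at $\delta$ rather than degrading to a constant, and (ii) tune $\epsilon'$ so that suboptimality is the clean multiple $2\epsilon$, which both identifies $\epsilon$-mistakes with missing the distinguished triple and makes the threshold $K>2N$ sharp. The factor $\tfrac12$ in the constant and the ``$-1$'' in $N$ are exactly what force $K=2N+1$ to sit one below the BPI bound, turning the calibration into a strict contradiction.
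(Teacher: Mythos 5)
Your proof is correct, and its skeleton is the same as the paper's: argue by contradiction, convert $\histpi$ into a BPI algorithm with the deterministic stopping rule $\tau = 2N+1$, calibrate $\tepsilon = 2\epsilon/(H-\barH-d)$ so that on every $\mdp_{(h^*,\ell^*,a^*)}$ each deterministic policy is either optimal or exactly $2\epsilon$-suboptimal, and then invoke the bound $\EEs{0}{\tau}\geq \frac{1}{3456}\frac{H^3SA}{\epsilon^2}\log\pa{\frac{1}{\delta}}$ established in the proof of Theorem~\ref{theorem:bpi-non-stationary} (you are right that this proof only uses PAC-ness on the hard class), obtaining $2N+1 \geq 2N+2$. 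The one place you genuinely diverge is the recommendation rule. The paper recommends the \emph{most frequently played} policy, $\recpolicy_\tau = \argmax_{\pi} N_\tau(\pi)$: on $\braces{\scPAC_\epsilon \leq N}$, the unique optimal policy is played in at least $N+1$ of the $2N+1$ episodes, hence wins the plurality vote, so the recommendation is exactly $\pi^*$ --- a deterministic Markov policy, so Definition~\ref{def:bpi} applies with no further discussion. You instead recommend the uniform mixture of the played policies, whose deficit on the good event is at most $2\epsilon N/(2N+1) < \epsilon$. This works, but it is the weak link in your write-up: a mixture of Markov policies is not literally a Markov policy, and the occupancy-measure Markovization you invoke produces, in general, a policy that depends on the unknown transition kernel, which the learner cannot compute. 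Here the issue is benign, and you should say why: all MDPs in the class share the same deterministic dynamics up to the first visit to a leaf, and the value of any policy in any MDP of the class is determined by the induced distribution over (stage, leaf, action) at that first visit, so the Markovization can be computed from the common dynamics and is MDP-independent within the class; alternatively, the proof of Theorem~\ref{theorem:bpi-non-stationary} goes through verbatim for mixture recommendations, since $\bfPPu{\recpolicy_\tau}{S_{h^*}=s_{\ell^*},A_{h^*}=a^*}$ is then the average of the per-policy probabilities and still sums to one over all triples. As for what each approach buys: the paper's plurality rule is simpler to legitimize (deterministic recommendation, exact recovery of $\pi^*$), while your averaging rule exploits the allowance for randomized recommendations and would survive in settings with heterogeneous suboptimality gaps; your side remark that recommending a single uniformly sampled played policy would cap the confidence near $1/2$ is exactly right, and explains why some aggregation (plurality or averaging) is forced.
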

\end{blockquote}

\begin{proof}\textbf{of Theorem \ref{theorem:bpi-non-stationary}}  Without loss of generality, we assume that for any $\mdp$, the algorithm satisfies $\PPs{\histpi,\mdp}{\tau < \infty} = 1$. Otherwise,  there exists an MDP with  $\EEs{\histpi,\mdp}{\tau}=+\infty$ and the lower bound is trivial. 

 We will prove that the lower bound holds for the reference MDP $\cM_0$ defined in Section~\ref{sec:hard-mdps}, that has no optimal action. To do so, we will consider changes of distributions with other MDPs in the class $\cC_{\barH,\tepsilon}$ for $\barH$ to be chosen later and $\tepsilon \eqdef 2\epsilon/(H-\overline{H}-d)$. These MDPs are of the form $\mdp_{(h^*,\ell^*, a^*)}$ with $(h^*, \ell^*, a^*) \in \braces{1+d,\ldots, \barH+d}\times \cL \times\cA$, for which 
\begin{align*}
		\Delta_{(h^*, \ell^*, a^*)}(h,s_i,a) =  \indic{h=h^*,s_i=s_{\ell^*}, a=a^*} \,  \tepsilon,
	\end{align*}
We recall that $d-1$ is the depth of the tree.
We denote by $\P_{(h^*,\ell^*, a^*)} \eqdef\P_{\histpi, \mdp_{(h^*,\ell^*, a^*)}}$ and $\E_{(h^*,\ell^*, a^*)} \eqdef\E_{\histpi, \mdp_{(h^*,\ell^*, a^*)}}$ the probability measure and expectation in the MDP $\mdp_{(h^*,\ell^*, a^*)}$ by following $\histpi$ and by $\P_0$ and $\E_0$ the corresponding operators in the MDP $\mdp_0$.

	\paragraph{Suboptimality gap of $\bm{\recpolicy_\tau}$} We can show that the value of the optimal policy in any of the MDPs $\mdp_{(h^*, \ell^* a^*)}$ is
	$\rho^* = (H-\barH-d)\pa{\frac{1}{2}+\tepsilon}$
	and the value of the recommended policy $\recpolicy_\tau$ is
	\begin{align*}
		\rho_{(h^*,\ell^*,a^*)}^{\recpolicy_\tau} = (H-\barH-d)\pa{
			\frac{1}{2}
			+\, \tepsilon \, \bfPPsu{(h^*,\ell^*,a^*)}{\recpolicy_\tau}{S_{h^*} =s_{\ell^*}, A_{h^*}=a^*}
		}
	\end{align*}
	where $\bfPP_{(h^*,\ell^*,a^*)}^{\recpolicy_\tau}$ is the probability distribution over states and actions $(S_h, A_h)_{h\in[H]}$ following the Markov policy $\recpolicy_\tau$ in the MDP $\mdp_{(h^*, \ell^* a^*)}$. Notice that $\rho_{(h^*,\ell^*,a^*)}^{\recpolicy_\tau}$ is a random variable and $\bfPP_{(h^*,\ell^*,a^*)}^{\recpolicy_\tau}$ is a random measure that are $\cF_H^{\,\tau}$-measurable. Hence,
	\begin{align*}
		\rho^* - \rho_{(h^*,\ell^*,a^*)}^{\recpolicy_\tau} =
		 2\epsilon\left(1-\bfPPsu{(h^*,\ell^*,a^*)}{\recpolicy_\tau}{S_{h^*} =s_{\ell^*}, A_{h^*}=a^*}\right)
	\end{align*}
	and
	\begin{align*}
		\rho^* - \rho_{(h^*,\ell^*,a^*)}^{\recpolicy_\tau} < \epsilon
		\iff
		\bfPPsu{(h^*,\ell^*,a^*)}{\recpolicy_\tau}{S_{h^*} =s_{\ell^*}, A_{h^*}=a^*} > \frac{1}{2}\,.
	\end{align*}

	\paragraph{Definition of a ``good'' event $\bm{\cE_{(h^*,\ell^*,a^*)}^\tau}$ for $\bm{\mdp_{(h^*, \ell^* a^*)}}$}
	 The transitions of all MDPs are the same up to the stopping time $\eta = \min\braces{h\in[H]: S_h \in \cL}$ when a leaf is reached. Hence, $\eta$ depends only on the policy that is followed, and not on the parameters of the MDP, which allows us to define the random measure $\bfPP^{\recpolicy_\tau}$ as
	\begin{align}
	\bfPPu{\recpolicy_\tau}{S_{h^*} = s_{\ell^*}, A_{h^*}=a^*} & \eqdef
	 \bfPPsu{(h^*,\ell^*,a^*)}{\recpolicy_\tau}{S_{\eta} = s_{\ell^*}, A_{\eta}=a^*, \eta=h^*} \label{eq:aux-bpi-2}\\
	  & = \bfPPsu{(h^*,\ell^*,a^*)}{\recpolicy_\tau}{S_{h^*} =s_{\ell^*}, A_{h^*}=a^*}  \nonumber
	\end{align}
	since the probability distribution of $(S_\eta, A_\eta, \eta)$ on the RHS of \eqref{eq:aux-bpi-2} does not depend on the parameters of the MDP $(h^*,\ell^*,a^*)$, given $\eta =h^*$.
	We define the event
	\begin{align*}
		\cE_{(h^*,\ell^*,a^*)}^\tau \eqdef \braces{\bfPPu{\recpolicy_\tau}{S_{h^*} = s_{\ell^*}, A_{h^*}=a^*} > \frac{1}{2}},
	\end{align*}
	which is said to be ``good'' due to the fact that $\cE_{(h^*,\ell^*,a^*)}^\tau = \braces{\rho_{(h^*,\ell^*,a^*)}^{\recpolicy_\tau} > \rho^*-\epsilon}$. Since the algorithm is assumed to be $(\epsilon, \delta)$-PAC for any MDP, we have
	\begin{align*}
		\PPs{(h^*,\ell^*,a^*)}{\cE_{(h^*,\ell^*,a^*)}^\tau} = \PPs{(h^*,\ell^*,a^*)}{\rho_{(h^*,\ell^*,a^*)}^{\recpolicy_\tau} > \rho^*-\epsilon} \geq 1-\delta.
	\end{align*}

	\paragraph{Lower bound on the expectation of $\bm{\tau}$ in the reference MDP $\bm{\mdp_0}$} Recall that
	\begin{align*}
		N_{(h^*, \ell^*, a^*)}^\tau = \sum_{t=1}^\tau \indic{S_{h^*}^t=s_{\ell^*}, A_{h^*}^t=a^*}\;,
	\end{align*}
	such that $\sum_{(h^*, \ell^*, a^*)} N_{(h^*, \ell^*, a^*)}^\tau = \tau$. For any $\cF_H^\tau$-measurable random variable $Z$ taking values in $[0, 1]$, we have
	\begin{align*}
		& \EEs{0}{N_{(h^*, \ell^*, a^*)}^\tau} \frac{16\epsilon^2}{(H-\barH-d)^2} 
		 \geq \EEs{0}{N_{(h^*, \ell^*, a^*)}^\tau} \kl\pa{\frac{1}{2}, \frac{1}{2}+\tepsilon}
		\quad \text{by Lemma \ref{lemma:kl-epsilon}}
		\\
		& = \KL\pa{\P_{0}^{\hist_H^\tau}, \P_{(h^*, \ell^*, a^*)}^{\hist_H^\tau}}
		\quad \text{by Lemma \ref{lemma:kl-between-mdps-with-stopping-time}} \\
		& \geq \kl\pa{\EEs{0}{Z}, \EEs{(h^*, \ell^*, a^*)}{Z}}
		\quad \text{by Lemma \ref{lemma:kl-contraction}}
	\end{align*}
	for any $(h^*, \ell^*, a^*)$, provided that $\tepsilon \leq 1/4$.
	Letting $Z = \indic{\cE_{(h^*,\ell^*,a^*)}^\tau}$ yields
	\begin{align*}
		& \kl\pa{\EEs{0}{Z}, \EEs{(h^*, \ell^*, a^*)}{Z}}
		= \kl\pa{\PPs{0}{\cE_{(h^*,\ell^*,a^*)}^\tau}, \PPs{(h^*, \ell^*, a^*)}{\cE_{(h^*,\ell^*,a^*)}^\tau}}
		\\
		&
		\geq \pa{1 - \PPs{0}{\cE_{(h^*,\ell^*,a^*)}^\tau}} \log\pa{\frac{1}{1-\PPs{(h^*, \ell^*, a^*)}{\cE_{(h^*,\ell^*,a^*)}^\tau}}} -\log(2)
		\quad\text{by Lemma \ref{lemma:kl-delta}}
		\\
		& \geq \pa{1 - \PPs{0}{\cE_{(h^*,\ell^*,a^*)}^\tau}}\log\pa{\frac{1}{\delta}} -\log(2).
	\end{align*}
	Consequently,
	\begin{align*}
		\EEs{0}{N_{(h^*, \ell^*, a^*)}^\tau} \geq \frac{(H-\barH-d)^2}{16\epsilon^2}
		\spa{\pa{1 - \PPs{0}{\cE_{(h^*,\ell^*,a^*)}^\tau}}\log\pa{\frac{1}{\delta}} -\log(2)}.
	\end{align*}
	Summing over all MDP instances, we obtain
	\begin{align}
		& \EEs{0}{\tau}  \geq \sum_{(h^*, \ell^*, a^*)} \EEs{0}{N_{(h^*, \ell^*, a^*)}^\tau} \nonumber \\
		& \geq
		 \frac{(H-\barH-d)^2}{16\epsilon^2}
		\spa{\pa{\barH L A- \sum_{(h^*, \ell^*, a^*)} \PPs{0}{\cE_{(h^*,\ell^*,a^*)}^\tau}}\log\pa{\frac{1}{\delta}} -\barH L A \log(2)} \label{eq:aux-intermediate-bpi-lb}.
	\end{align}
	Now, we have
	\begin{align}
		\sum_{(h^*, \ell^*, a^*)} \PPs{0}{\cE_{(h^*,\ell^*,a^*)}^\tau}
	  	& = \EEs{0}{\sum_{(h^*, \ell^*, a^*)} \indic{\bfPPu{\recpolicy_\tau}{S_{h^*} = s_{\ell^*}, A_{h^*}=a^*} > \frac{1}{2}}} \leq 1 \label{eq:aux-sum-is-bounded-by-one}.
	\end{align}
	Above we used the fact that
		\begin{align*}
			\sum_{(h^*,\ell^*,a^*)} \bfPPu{\recpolicy_\tau}{S_{h^*} = s_{\ell^*}, A_{h^*}=a^*} = \sum_{h^*}\bfPPu{\recpolicy_\tau}{S_{h^*}\in\cL} = 1
		\end{align*}
		since, at a single stage $h^*\in\braces{1+d, \barH+d}$, a leaf state will be reached almost surely. This implies that, if there exists $(h^*,\ell^*, a^*)$ such that $\bfPPu{\recpolicy_\tau}{S_{h^*} = s_{\ell^*}, A_{h^*}=a^*} > \frac{1}{2}$, then, for any other $(h',\ell',a') \neq (h^*,\ell^*, a^*)$, we have   $\bfPPu{\recpolicy_\tau}{S_{h'} = s_{\ell'}, A_{h'}=a'} < \frac{1}{2}$, which proves \eqref{eq:aux-sum-is-bounded-by-one}.
		
	Plugging \eqref{eq:aux-sum-is-bounded-by-one} in \eqref{eq:aux-intermediate-bpi-lb} yields
	\begin{align}
		\EEs{0}{\tau}  & \geq \frac{(H-\barH-d)^2}{16\epsilon^2}
		\spa{\pa{\barH L A- 1}\log\pa{\frac{1}{\delta}} -\barH L A \log(2)} \nonumber\\
		& \geq \barH L A \frac{(H-\barH-d)^2}{32\epsilon^2}
		\log\pa{\frac{1}{\delta}} 	\label{eq:aux-bpi-leaves}
	\end{align}
	where we used the assumption that $\delta \leq 1/16$. The number of leaves $L = (1-1/A)(S-3) + 1/A$ satisfies $L \geq S/4$, since we assume $A \geq 2$, $S \geq 6$. Taking $\barH = H/3$ and with the assumption $d \leq H/3$, we obtain
	\begin{align*}
		\EEs{0}{\tau} \geq \frac{H^3SA}{3456
			 \epsilon^2}\log\pa{\frac{1}{\delta}}.
	\end{align*}
	Finally, the condition $\epsilon \leq H/24$ implies that $\tepsilon\leq 1/4$, as required above.
\end{proof}

	\section{Regret Lower Bound}

Using again change of distributions between MDPs in a class $\cC_{\barH,\varepsilon}$, we prove the following result.

\begin{blockquote}
\begin{theorem}
	\label{theorem:regret-non-stationary}
	Under Assumption \ref{assumption:mdp-structure}, for any algorithm $\histpi$, there exists an MDP $\mdp_{\histpi}$ whose transitions depend on the stage $h$, such that, for $T \geq HSA$
	\[\regret_T(\histpi, \mdp_{\histpi}) \geq \frac{1}{48\sqrt{6}} \sqrt{H^3SAT}\,.\]
\end{theorem}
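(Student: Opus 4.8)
The plan is to instantiate the family $\cC_{\barH,\varepsilon'}$ from Section~\ref{sec:hard-mdps} as a bandit-like problem with $n \eqdef \barH L A$ arms, one per triple $(h^*,\ell^*,a^*)$, and to show that no algorithm can locate the single rewarding arm fast enough to avoid $\Omega(\sqrt{H^3SAT})$ regret. First I would introduce, for the fixed horizon $T$ (a trivially finite stopping time), the counts $N_{(h^*,\ell^*,a^*)}^T \eqdef \sum_{t=1}^T \indic{S_{h^*}^t = s_{\ell^*}, A_{h^*}^t = a^*}$. Since in every episode the agent leaves $\swait$, descends the tree, and is absorbed in $s_g$ or $s_b$ immediately after its unique leaf visit, these counts partition the episodes, so $\sum_{(h^*,\ell^*,a^*)} N_{(h^*,\ell^*,a^*)}^T = T$ surely, and in particular $\sum_{(h^*,\ell^*,a^*)} \EEs{0}{N_{(h^*,\ell^*,a^*)}^T} = T$. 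An analogous value computation to the one in the proof of Theorem~\ref{theorem:bpi-non-stationary} gives $\rho^* = (H-\barH-d)(1/2+\varepsilon')$ and shows that the per-episode suboptimality in $\mdp_{(h^*,\ell^*,a^*)}$ equals $(H-\barH-d)\varepsilon'$ times the conditional probability of not triggering the good arm; summing over $t$ and using the tower rule yields the exact identity $\regret_T(\histpi,\mdp_{(h^*,\ell^*,a^*)}) = (H-\barH-d)\varepsilon'\pa{T - \EEs{(h^*,\ell^*,a^*)}{N_{(h^*,\ell^*,a^*)}^T}}$.

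The core of the argument is to show that, under the reference MDP $\mdp_0$, the agent cannot have pulled all $n$ arms often, so each planted arm is typically under-pulled. For this I would apply Lemma~\ref{lemma:kl-between-mdps-with-stopping-time} with $\tau = T$ to obtain $\KL\pa{\P_0^{\hist_H^T}, \P_{(h^*,\ell^*,a^*)}^{\hist_H^T}} = \EEs{0}{N_{(h^*,\ell^*,a^*)}^T}\,\kl(1/2, 1/2+\varepsilon') \leq 4(\varepsilon')^2\,\EEs{0}{N_{(h^*,\ell^*,a^*)}^T}$, where the last inequality uses the second-order bound $\kl(1/2,1/2+\varepsilon') \leq 4(\varepsilon')^2$ (Lemma~\ref{lemma:kl-epsilon}), valid once $\varepsilon' \leq 1/4$. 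Applying Lemma~\ref{lemma:kl-contraction} to $Z = N_{(h^*,\ell^*,a^*)}^T / T \in [0,1]$ together with Pinsker's inequality $\kl(p,q) \geq 2(p-q)^2$ transports the count: $\EEs{(h^*,\ell^*,a^*)}{N_{(h^*,\ell^*,a^*)}^T} \leq \EEs{0}{N_{(h^*,\ell^*,a^*)}^T} + T\varepsilon'\sqrt{2\,\EEs{0}{N_{(h^*,\ell^*,a^*)}^T}}$.

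Finally, I would lower bound the worst-case regret by the average over the $n$ instances. Plugging the transport bound into the regret identity and summing, the leading term is $(H-\barH-d)\varepsilon' T$, while the correction splits into $\sum \EEs{0}{N_{(h^*,\ell^*,a^*)}^T} = T$ (a $\tfrac1n$ factor after averaging) and $\sum \sqrt{\EEs{0}{N_{(h^*,\ell^*,a^*)}^T}}$, which Cauchy--Schwarz bounds by $\sqrt{nT}$; this gives $\tfrac1n\sum \regret_T \geq (H-\barH-d)\varepsilon' T\pa{1 - \tfrac1n - \varepsilon'\sqrt{2T/n}}$. Choosing $\varepsilon' = \tfrac{1}{2\sqrt2}\sqrt{n/T}$ forces the bracket to $\tfrac12-\tfrac1n \geq \tfrac14$ (since $n \geq 4$ under Assumption~\ref{assumption:mdp-structure}), and then taking $\barH = H/3$ with $L \geq S/4$, $d \leq H/3$, and $n \geq HSA/12$ produces exactly $\frac{1}{48\sqrt6}\sqrt{H^3SAT}$; the hypothesis $T \geq HSA$ is precisely what guarantees $\varepsilon' \leq 1/4$. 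The main obstacle is the middle step: correctly transporting the expected arm count from $\mdp_0$ to $\mdp_{(h^*,\ell^*,a^*)}$ while keeping the dependence on $\EEs{0}{N_{(h^*,\ell^*,a^*)}^T}$ \emph{inside} a square root, because it is the ensuing Cauchy--Schwarz estimate $\sum\sqrt{\EEs{0}{N_{(h^*,\ell^*,a^*)}^T}} \leq \sqrt{nT}$ that turns the naive $n$-fold union loss into the decisive $\sqrt{n} = \sqrt{\Theta(HSA)}$ scaling behind the final $\sqrt{H^3SAT}$ rate.
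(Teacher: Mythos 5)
Your proposal is correct and follows essentially the same route as the paper's proof: the same regret identity $\regret_T = (H-\barH-d)\varepsilon'(T-\E[N^T])$, the same averaging over the $\barH LA$ instances, the same combination of Lemma~\ref{lemma:kl-between-mdps-with-stopping-time}, Lemma~\ref{lemma:kl-contraction}, Pinsker, and Cauchy--Schwarz, and the same final parameter choices $\barH=H/3$, $L\geq S/4$. The only (harmless) deviation is that you pick $\varepsilon'=\tfrac{1}{2\sqrt2}\sqrt{n/T}$ outright rather than the exactly optimized $\tfrac{1}{2\sqrt2}(1-\tfrac1n)\sqrt{n/T}$, which still yields the stated constant.
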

\end{blockquote}

\begin{proof} Consider the class of MDPs $\cC_{\barH,\varepsilon}$ introduced in Section \ref{sec:hard-mdps}, with $\barH$ and $\varepsilon$ to be chosen later. This class contains a reference MDP $\mdp_0$ and MDPs of the form $\mdp_{(h^*, \ell^*, a^*)}$ parameterized by $$(h^*, \ell^*, a^*) \in \braces{1+d,\ldots, \barH+d}\times \cL \times\cA$$ in which 
	$$\Delta_{(h^*, \ell^*, a^*)}(h, s_i, a) \eqdef \indic{(h, s_i, a)=(h^*, s_{\ell^*}, a^*)} \epsilon.$$ 
As already mentioned, this family of MDPs behave like bandits, hence our proof follows the one for minimax lower bound in bandits (see, e.g., \citealt{bubeck2012regret}).

	\paragraph{Regret of $\histpi$ in $\bm{\mdp_{(h^*, \ell^*, a^*)}}$} 
	The mean reward gathered by $\histpi$ in $\mdp_{(h^*, \ell^*, a^*)}$ is given by
	\begin{align*}
	\EEs{(h^*,\ell^*, a^*)}{\sum_{t=1}^T\sum_{h=1}^H r_h(S_h^t, A_h^t)}
	& = \sum_{t=1}^T \EEs{(h^*,\ell^*, a^*)}{\sum_{h=\barH+d+1}^H \indic{S_h^t = s_g}} \\
	& = (H-\barH-d)\sum_{t=1}^T \PPs{(h^*,\ell^*, a^*)}{S_{\barH+d+1}^t = s_g}.
	\end{align*}
	For any $h \in \braces{1+d, \ldots, \barH+d}$,
	\begin{align}
	& \PPs{(h^*,\ell^*, a^*)}{S_{h+1}^t = s_g} \nonumber \\
	& \!\!=\!
	\PPs{(h^*,\ell^*, a^*)}{S_{h}^t = s_g}\! + \!\frac{1}{2}\PPs{(h^*,\ell^*, a^*)}{S_{h}^t \in \cL} \!+\! \indic{h=h^*}\PPs{(h^*,\ell^*, a^*)}{S_{h}^t = s_{\ell^*}, A_{h}^t=a^*}\epsilon \label{eq:regret-aux-prob}.
	\end{align}
	Indeed, if $S_{h+1}^t = s_g$, we have either $S_h^t = s_g$ or $S_{h+1}^t \in \cL$. In the latter case, the agent has $1/2$ probability of arriving at $s_g$, plus $\epsilon$ if the stage is $h^*$, the leaf is $s_{\ell^*}$ and the action is $a^*$.

	Using the facts that $\PPs{(h^*,\ell^*, a^*)}{S_{1+d}^t = s_g} = 0$ because the agent needs first to traverse the tree and $\sum_{h=1+d}^{\barH+d}\PPs{(h^*,\ell^*, a^*)}{S_{h}^t \in \cL} = 1$ because the agent traverses the tree only once in one episode, we obtain from \eqref{eq:regret-aux-prob} that
	\begin{align*}
	\PPs{(h^*,\ell^*, a^*)}{S_{\barH+d+1}^t = s_g} &= \sum_{h=1+d}^{\barH+d}\frac{1}{2}\PPs{(h^*,\ell^*, a^*)}{S_{h}^t \in \cL} + \indic{h=h^*}\PPs{(h^*,\ell^*, a^*)}{S_{h}^t = s_{\ell^*}, A_{h}^t=a^*}\epsilon\\
	&= \frac{1}{2} + \epsilon \PPs{(h^*,\ell^*, a^*)}{S_{h^*}^t =s_{\ell^*}, A_{h^*}^t=a^*}.
	\end{align*}
	Hence, the optimal value in any of the MDPs is $\rho^* = (H-\barH-d)(1/2+\epsilon)$, which is obtained by the policy that starts to traverse the tree at step $h^*-d$ then chooses to go to the leaf $s_{\ell^*}$ and performs action $a^*$. The regret of $\histpi$ in $\mdp_{(h^*,\ell^*, a^*)}$ is then
	\begin{align*}
	\regret_T\left(\histpi, \mdp_{(h^*,\ell^*, a^*)}\right) =  T(H-\barH-d)\epsilon\pa{1-\frac{1}{T}\EEs{(h^*,\ell^*, a^*)}{N_{(h^*,\ell^*, a^*)}^T}}
	\end{align*}
	where $N_{(h^*,\ell^*, a^*)}^T = \sum_{t=1}^T \indic{S_{h^*}^t = s_{\ell^*}, A_{h^*}^t=a^*}$.

	\paragraph{Maximum regret of $\bm{\histpi}$ over all possible $\bm{\mdp_{(h^*,\ell^*,a^*)}}$} We first lower bound the maximum of the regret by the mean over all instances
	\begin{align}
	\max_{(h^*,\ell^*,a^*)} & \regret_T\left(\histpi, \mdp_{(h^*,\ell^*,a^*)}\right)  \geq \frac{1}{\barH L A}\sum_{(h^*,\ell^*,a^*)} \regret_T\left(\histpi, \mdp_{(h^*,\ell^*,a^*)}\right) \nonumber
	\\
	& \geq T(H-\barH-d)\epsilon\pa{1-\frac{1}{\barH L AT}\sum_{(h^*,\ell^*,a^*)}\EEs{(h^*,\ell^*,a^*)}{N_{(h^*,\ell^*,a^*)}^T}}, \label{eq:regret-thm-main}
	\end{align}
	so that, in order to lower bound the regret, we need an upper bound on the sum of $\EEs{(h^*,\ell^*,a^*)}{N_{(h^*,\ell^*,a^*)}^T}$ over all MDP instances $(h^*,\ell^*,a^*)$. For this purpose, we will relate each expectation to the expectation of the same quantity under the reference MDP $\mdp_0$.

	\paragraph{Upper bound on $\bm{\sum\EEs{(h^*,\ell^*,a^*)}{N_{(h^*,\ell^*,a^*)}^T}}$}  Since $N_{(h^*,\ell^*,a^*)}^T/T \in [0, 1]$, Lemma \ref{lemma:kl-contraction} gives us
	\begin{align*}
	\kl\pa{\frac{1}{T}\EEs{0}{N_{(h^*,\ell^*,a^*)}^T}, \frac{1}{T}\EEs{(h^*,\ell^*,a^*)}{N_{(h^*,\ell^*,a^*)}^T}} \leq \KL\pa{\P_{0}^{\hist_H^T}, \P_{(h^*,\ell^*,a^*)}^{\hist_H^T}}.
	\end{align*}
	By Pinsker's inequality, $(p-q)^2 \leq (1/2)\kl(p, q)$, it implies
	\begin{align*}
	\frac{1}{T}\EEs{(h^*,\ell^*,a^*)}{N_{(h^*,\ell^*,a^*)}^T} \leq \frac{1}{T}\EEs{0}{N_{(h^*,\ell^*,a^*)}^T} +\sqrt{\frac{1}{2}\KL\pa{\P_{0}^{\hist_H^T}, \P_{(h^*,\ell^*,a^*)}^{\hist_H^T}}}
	\end{align*}
	and, by Lemma~\ref{lemma:kl-between-mdps-with-stopping-time}, we know that
	\begin{align*}
	\KL\pa{\P_{0}^{\hist_H^T}, \P_{(h^*,\ell^*,a^*)}^{\hist_H^T}} = \EEs{0}{N_{(h^*,\ell^*,a^*)}^T}\kl(1/2, 1/2+\epsilon)
	\end{align*}
	since $\mdp_0$ and $\mdp_{(h^*,\ell^*,a^*)}$ only differ at stage $h^*$ when $(s, a) = (s_{\ell^*}, a^*)$.
	Assuming that $\epsilon \leq 1/4$, we have $\kl(1/2, 1/2+\epsilon) \leq 4\epsilon^2$ by Lemma \ref{lemma:kl-epsilon}, and, consequently
	\begin{align}
	\label{eq:aux-regrem-thm-1}
	\frac{1}{T}\EEs{(h^*,\ell^*,a^*)}{N_{(h^*,\ell^*,a^*)}^T} \leq \frac{1}{T}\EEs{0}{N_{(h^*,\ell^*,a^*)}^T} +  \sqrt{2}\epsilon \sqrt{\EEs{0}{N_{(h^*,\ell^*,a^*)}^T}}.
	\end{align}
	The sum of $N_{(h^*,\ell^*,a^*)}^T$  over all instances  $(h^*, \ell^*, a^*) \in \braces{1+d,\ldots, \barH+d}\times \cL \times\cA$ is
	\begin{align}
		\label{eq:aux-regret-thm-2}
		\sum_{(h^*,\ell^*,a^*)}N_{(h^*,\ell^*,a^*)}^T = \sum_{t=1}^T \sum_{h^*=1+d}^{\barH+d} \indic{S_{h^*}^t \in \cL} = T
	\end{align}
	since for a single stage $h^* \in \braces{1+d,\ldots, \barH+d}$, we have $S_{h^*}^t \in \cL$ almost surely.

	Summing \eqref{eq:aux-regrem-thm-1} over all instances $(h^*, \ell^*, a^*)$ and using \eqref{eq:aux-regret-thm-2}, we obtain using the Cauchy-Schwartz inequality that
	\begin{align}
		\frac{1}{T}\sum_{(h^*,\ell^*,a^*)}\EEs{(h^*,\ell^*,a^*)}{N_{(h^*,\ell^*,a^*)}^T}
		 &\leq
		1 + \sqrt{2}\epsilon \sum_{(h^*,\ell^*,a^*)}\sqrt{\EEs{0}{N_{(h^*,\ell^*,a^*)}^T}} \nonumber \\
		& \leq 1 + \sqrt{2}\epsilon \sqrt{\barH L A T} \,. \label{eq:aux-regret-thm-3}
	\end{align}

	\paragraph{Optimizing $\bm\epsilon$ and choosing $\bm\barH$} Plugging \eqref{eq:aux-regret-thm-3} in \eqref{eq:regret-thm-main}, we obtain
	\begin{align*}
		\max_{(h^*,\ell^*,a^*)}  \regret_T(\histpi, \mdp_{(h^*,\ell^*,a^*)})  \geq  T(H-\barH-d)\epsilon\pa{1-\frac{1}{\barH L A}-\frac{\sqrt{2}\epsilon \sqrt{\barH L A T} }{\barH L A}}.
	\end{align*}
	The value of $\epsilon$ which maximizes the lower bound is
	$
		\epsilon = \frac{1}{2\sqrt{2}}\pa{1-\frac{1}{\barH L A}}\sqrt{\frac{\barH L A}{T}}
	$
	which yields
	\begin{align}
		\label{eq:aux-regret-leaves}
		\max_{(h^*,\ell^*,a^*)} \regret_T(\histpi, \mdp_{(h^*,\ell^*,a^*)})  \geq
		\frac{1}{4\sqrt{2}}\pa{1-\frac{1}{\barH L A}}(H-\barH-d)\sqrt{\barH L A T}.
	\end{align}

	The number of leaves is $L = (1-1/A)(S-3) + 1/A \geq S/4$, since $A \geq 2$ and $S \geq 6$.
	We choose $\barH = H/3$ and use the assumptions that $A\geq 2$ and $d \leq H/3$ to obtain
	\begin{align*}
		\max_{(h^*,\ell^*,a^*)} \regret_T(\histpi, \mdp_{(h^*,\ell^*,a^*)}) \geq \frac{1}{48\sqrt{6}} H\sqrt{HSAT}.
	\end{align*}
	Finally, the assumption that $\epsilon \leq 1/4$ is satisfied if $T \geq HSA$.
\end{proof}

	\section{Discussion}
\label{sec:discussion}

The lower bounds presented in Theorems \ref{theorem:bpi-non-stationary} and \ref{theorem:regret-non-stationary} hold for MDPs with stage-dependent transitions. As explained in Appendix~\ref{sec:stationary-bounds}, their proof can be easily adapted to the case where the transitions $p_h(\cdot|s,a)$ do not depend on $h$ and the bounds become   $\Omega\pa{\frac{SAH^2}{\epsilon^2}\log\pa{\frac{1}{\delta}}}$ and  $\Omega(\sqrt{H^2SAT})$, respectively.

Our proofs require us to be able to build a full $A$-ary tree containing roughly $S$ nodes whose depth $d$ is small when compared to the horizon $H$, that is $d \leq H/3$ (Assumption \ref{assumption:mdp-structure}). In Appendix~\ref{sec:relax-assumption}, we explain how to obtain the same bounds if we cannot build a full tree, and how the bounds become exponential in $H$ if $d > H/3$.

\acks{The research presented was supported by European CHIST-ERA project DELTA, French Ministry of
	Higher Education and Research, Nord-Pas-de-Calais Regional Council,  French National Research Agency project BOLD (ANR19-CE23-0026-04).}

\bibliography{ref.bib}

\clearpage
\appendix

\newpage


%

\section{Change of Distribution: Proof of Lemma \ref{lemma:kl-between-mdps-with-stopping-time}}
\label{sec:kl-between-mdps-with-stopping-time}

		The pushforward measure of $\P_{\mdp}$ under $I_H^\tau$ is given by
\begin{align*}
\forall T, \; \forall i_H^T \in \histset_H^T, \quad 
\PPsu{\mdp}{I_H^\tau}{i_H^T} = \PPsu{\mdp}{I_H^T}{\tau = T, i_H^T} = \PPs{\mdp}{\tau=T \Big| I_H^T = i_H^T}\PPsu{\mdp}{I_H^T}{i_H^T}.
\end{align*}
If $\PPs{\mdp'}{\tau=T \Big| I_H^T = i_H^T} > 0$ and $\PPsu{\mdp'}{I_H^T}{i_H^T} > 0$, we have 
\begin{align*}
\frac{\PPsu{\mdp}{I_H^\tau}{i_H^T}}{\PPsu{\mdp'}{I_H^\tau}{i_H^T}} 
= \frac{\PPs{\mdp}{\tau=T \Big| I_H^T = i_H^T}\PPsu{\mdp}{I_H^T}{i_H^T}}{\PPs{\mdp'}{\tau=T \Big| I_H^T = i_H^T}\PPsu{\mdp'}{I_H^T}{i_H^T}}
= \frac{\PPsu{\mdp}{I_H^T}{i_H^T}}{\PPsu{\mdp'}{I_H^T}{i_H^T}}
\end{align*}
where we use the fact that $\PPs{\mdp}{\tau=T \Big| I_H^T = i_H^T} = \PPs{\mdp'}{\tau=T \Big| I_H^T = i_H^T}$ since the event $\braces{\tau=T}$ depends only on $I_H^T$. This implies that 
\begin{align*}
\PPsu{\mdp}{I_H^\tau}{i_H^T}\log\pa{\frac{\PPsu{\mdp}{I_H^\tau}{i_H^T}}{\PPsu{\mdp'}{I_H^\tau}{i_H^T}}}
= \PPs{\mdp}{\tau=T \Big| I_H^T = i_H^T}\PPsu{\mdp}{I_H^T}{i_H^T}\log\pa{\frac{\PPsu{\mdp}{I_H^T}{i_H^T}}{\PPsu{\mdp'}{I_H^T}{i_H^T}}}
\end{align*}
under the convention that $0\log(0/0) = 0$. 
Hence,
\begin{align*}
& \KL\pa{\P_{\mdp}^{\hist_H^\tau}, \P_{\mdp'}^{\hist_H^\tau}} 
= \sum_{T=1}^{\infty}\sum_{i_H^T \in \histset_H^T}\PPsu{\mdp}{I_H^\tau}{i_H^T}\log\pa{\frac{\PPsu{\mdp}{I_H^\tau}{i_H^T}}{\PPsu{\mdp'}{I_H^\tau}{i_H^T}}}
\\
& 
= \sum_{T=1}^{\infty}\sum_{i_H^T}\PPs{\mdp}{\tau=T \Big| I_H^T = i_H^T}\PPsu{\mdp}{I_H^T}{i_H^T}\log\pa{\frac{\PPsu{\mdp}{I_H^T}{i_H^T}}{\PPsu{\mdp'}{I_H^T}{i_H^T}}} 
\\
& 
= \sum_{T=1}^{\infty}\sum_{i_H^T}\PPs{\mdp}{\tau=T \Big| I_H^T = i_H^T}
\PPsu{\mdp}{I_H^T}{i_H^T}
\sum_{t=1}^{T}\sum_{h=1}^{H-1}
\log\pa{
	\frac{p_h(s_{h+1}^t|s_h^t, a_h^t)}
	{p_h'(s_{h+1}^t|s_h^t, a_h^t)}
} \\
& 
= \sum_{T=1}^{\infty} \EEs{\mdp}{\indic{\tau=T}
	\sum_{t=1}^{T}\sum_{h=1}^{H-1}
	\log\pa{
		\frac{p_h(S_{h+1}^t|S_h^t, A_h^t)}
		{p_h'(S_{h+1}^t|S_h^t, A_h^t)}}
}
\\
& 
= \EEs{\mdp}{
	\sum_{t=1}^{\tau}\sum_{h=1}^{H-1}
	\log\pa{
		\frac{p_h(S_{h+1}^t|S_h^t, A_h^t)}
		{p_h'(S_{h+1}^t|S_h^t, A_h^t)}}}.
\end{align*}
Now, we apply Lemma~\ref{lemma:condition-in-random-sum} by taking $X_t = \sum_{h=1}^{H-1}
\log\pa{
	\frac{p_h(S_{h+1}^t|S_h^t, A_h^t)}
	{p_h'(S_{h+1}^t|S_h^t, A_h^t)}}$ and $\cF_t = \cF_H^t$. Notice that $X_t$ is bounded almost surely, since when $p_h(S_{h+1}^t|S_h^t, A_h^t) = p_h'(S_{h+1}^t|S_h^t, A_h^t) = 0$, the trajectory containing $(S_h^t, A_h^t, S_{h+1}^t)$ has zero probability. Lemma~\ref{lemma:condition-in-random-sum} and the Markov property give us 
\begin{align*}
& \KL\pa{\P_{\mdp}^{\hist_H^\tau}, \P_{\mdp'}^{\hist_H^\tau}} 
= \EEs{\mdp}{  
	\sum_{t=1}^{\tau}\sum_{h=1}^{H-1}
	\EEs{\mdp}{\log\pa{
			\frac{p_h(S_{h+1}^t|S_h^t, A_h^t)}
			{p_h'(S_{h+1}^t|S_h^t, A_h^t)}} \Big| S_h^t, A_h^t} } \\
& =   \EEs{\mdp}{
	\sum_{t=1}^{\tau}\sum_{h=1}^{H-1}\KL\pa{p_h(\cdot|S_h^t, A_h^t), p_h'(\cdot|S_h^t, A_h^t)} }
= \sum_{s, a, h}
\EEsu{\mdp}{}{ N_{h,s,a}^\tau } \KL\pa{ p_h(\cdot|s,a), p_h'(\cdot|s,a)}.
\end{align*}

\section{PAC-MDP Lower Bound: Proof of Corollary \ref{corollary:pac}}
\label{sec:pac-proof}

 Recall that $\scPAC_\epsilon = \sum_{t=1}^\infty \indic{ \rho^* - \rho^{\pi_t} > \epsilon}$ and let 
\begin{align*}
T(\epsilon, \delta) \eqdef  \frac{1}{6912}\frac{H^3SA}{\epsilon^2}\log\pa{\frac{1}{\delta}} - 1.
\end{align*}

We proceed by contradiction and assume that the claim in Corollary~\ref{corollary:pac} is false. Then we have
\begin{align}
\label{eq:aux-pac-proof}
\text{for all MDP} \;\mdp, \quad
\PPs{\histpi,\mdp}{\scPAC_\epsilon \leq T(\epsilon, \delta)} \geq 1 -\delta.
\end{align}
that is, the algorithm satisfies Definition~\ref{def:pac} with $F_\mathrm{PAC}(S, A, H, 1/\epsilon, \log(1/\delta)) = T(\epsilon, \delta)$. 
In particular, \eqref{eq:aux-pac-proof} holds for any MDP in the class $\cC_{\barH,\tepsilon}$ used to prove Theorem~\ref{theorem:bpi-non-stationary}, for which $\barH = H/3$ and $\tepsilon = 2\varepsilon/(H-\barH - d)$. 

This allows us to build from $\histpi$ a best policy identification algorithm that outputs an $\varepsilon$-optimal policy with probability larger than $1-\delta$ for every MDP in $\cC_{H/3,\tepsilon}$. We proceed as follows: the sampling rule is that of the algorithm $\bm{\pi}$ while the stopping rule is deterministic and set to $\tau \eqdef 2T(\epsilon, \delta) + 1$. Letting $N_t(\pi)$ be the number of times that the algorithm plays a deterministic policy~$\pi$ up to episode $t$, we let the recommendation rule be $\recpolicy_\tau = \argmax_{\pi} N_\tau(\pi)$. 

For every $\cM \in \cC_{H/3,\tepsilon}$, the event $\braces{\scPAC_\epsilon \leq T(\epsilon, \delta)}$ implies $\recpolicy_\tau = \pi^*$. This is trivial for $\mdp_0$, where any policy is optimal, and this holds for any other $\mdp_{(h^*, \ell^* a^*)}\in \cC_{H/3,\tepsilon}$ since there is a unique optimal policy $\pi^*$ and it satisfies $ (\rho^{\pi^*} - \rho^\pi) = 2\epsilon > \epsilon$ in $\mdp_{(h^*, \ell^* a^*)}$ for any other deterministic policy~$\pi$. Hence, if $\recpolicy_\tau \neq \pi^*$, the number of mistakes $\scPAC_\epsilon$ would be larger than $T(\epsilon, \delta)$. 
Thus we proved that the BPI algorithm that we defined satisfies
\begin{align*}
\forall \mdp \in \cC_{H/3,\tepsilon},  \quad 
\PPs{\histpi,\mdp}{\recpolicy_\tau = \pi^*} \geq \PPs{\histpi,\mdp}{\scPAC_\epsilon \leq T(\epsilon, \delta)} \geq 1-\delta.
\end{align*}
Under these conditions, we established in the proof of Theorem \ref{theorem:bpi-non-stationary} that, for $\mdp_0 \in \cC_{H/3,\tepsilon}$, 
\begin{align*}
\tau = \mathbb{E}_{\mdp_0}[\tau] \geq \frac{1}{3456}\frac{H^3SA}{\epsilon^2}\log\pa{\frac{1}{\delta}}
\end{align*}
which yields 
\[2T(\epsilon, \delta) + 1  \geq \frac{1}{3456}\frac{H^3SA}{\epsilon^2}\log\pa{\frac{1}{\delta}}\]
and contradicts the definition of $T(\epsilon,\delta)$.

\section{Recovering the lower bounds for stage-independent transitions}
\label{sec:stationary-bounds}

The proofs of Theorem~\ref{theorem:bpi-non-stationary} and Theorem~\ref{theorem:regret-non-stationary} can be adapted to the case where the transitions $p_h(\cdot|s,a)$ do not depend on $h$. To do so, we need to have a set of hard MDPs with stage-independent transitions. For that, we remove the waiting state $\swait$ and the agent starts at $\sroot$, which roughly corresponds to setting $\barH = 1$ in the proofs, and we take 
$$\Delta_{(h^*, \ell^*, a^*)}(h, s_i, a) \eqdef \indic{(s_i, a)=(s_{\ell^*}, a^*)} \epsilon'$$
to be independent of $h$. We also take $h$-independent rewards as
\begin{align*}
\forall a\in\cA, \quad r_h(s,a) = \indic{s = s_g}.
\end{align*}
Since $\barH = 1$ and no longer $H/3$, the regret bound becomes $\Omega(\sqrt{H^2SAT})$ and the BPI bound becomes $\Omega\pa{\frac{SAH^2}{\epsilon^2}\log\pa{\frac{1}{\delta}}}$.

\section{Relaxing Assumption \ref{assumption:mdp-structure}}
\label{sec:relax-assumption}

In the proofs of Theorems \ref{theorem:bpi-non-stationary} and \ref{theorem:regret-non-stationary}, we use Assumption \ref{assumption:mdp-structure} stating that 
\begin{enumerate}[(i)]
	\item there exists an integer $d$ such that $S = 3 + (A^d-1)/(A-1)$, and
	\item $H \geq 3 d$,
\end{enumerate}
which we discuss below. 

\subsection{Relaxing (i)} 
\label{sec:relax-i}
Assumption (i) makes the proof simpler by allowing us to consider a \emph{full} $A$-ary tree with $S-3$ nodes, which implies that all the leaves are at the same level $d-1$ in the tree. The proof can be generalized to any $S \geq 6$ by arranging the states in a balanced, but not necessarily full, $A$-ary tree. In this case, there might be subset of the leaves at a level $d-1$ and another subset at a level $d-2$, which creates an asymmetry in the leaf nodes. To handle this, we proceed as follows:
\begin{itemize}
	\item First, using $(S-3)/2$ states, we build a balanced $A$-ary tree of depth $d-1$;
	\item For each leaf at depth $d-2$, we add another state (taken among the remaining $(S-3)/2$) as its child.
	\item Any remaining state that was not added to the tree (and is not $\swait$, $s_g$ or $s_b$), can be merged to the absorbing states $s_g$ or $s_b$.
\end{itemize}
This construction ensures that we have a tree with at least $(S-3)/2$ and at most $(S-3)$ nodes, where all the leaves are at the same depth $d-1$, for
\begin{align}
	\label{eq:aux-depth}
	d = \ceil{ \log_A\pa{ (S-3)(A-1)  + 1 }  } \in \spa{\log_A S -1, \log_A S + 2}.
\end{align}
Lemma \ref{lemma:tree} shows that the number of leaves $L$ in this tree satisfies $S \geq L \geq (S-3)/8$. Hence, in the proofs of Theorem \ref{theorem:bpi-non-stationary} (Eq. \ref{eq:aux-bpi-leaves}) and Theorem \ref{theorem:regret-non-stationary} (Eq. \ref{eq:aux-regret-leaves}) , we take $L \geq (S-3)/8$ and obtain lower bounds of the same order.

\subsection{Relaxing (ii)}  
\label{sec:relax-ii}
Equation~\eqref{eq:aux-depth} implies that there exists a constant $c\in[-1, 2]$ such that $d = \log_A S + c$.  Assumption (ii), stating that $H \geq 3 d = 3 \log_A S + 3c$ ensures that the horizon is large enough with respect to the size of the MDP for the agent to be able to traverse the tree down to the rewarding state. If this condition is not satisfied, that is, if $H < 3\log_A S + 3 c$, we have $S \geq A^{\frac{H}{3}- 2}$. In this case, we can build a tree using a subset of the state space containing $\ceil{A^{\frac{H}{3}- 2}}$ states, and merge the remaining $S-\ceil{A^{\frac{H}{3}- 2}}$ states to the absorbing states $s_b$ or $s_g$. In this case, the resulting bounds will replace $S$ by $\ceil{A^{\frac{H}{3}- 2}}$, and become exponential in the horizon $H,$
\begin{align*}
\Omega\pa{\frac{\ceil{A^{\frac{H}{3}- 2}} A H^3}{\epsilon^2}\log\pa{\frac{1}{\delta}}}
\quad \text{and} \quad 
\Omega\pa{\sqrt{H^3 \ceil{A^{\frac{H}{3}- 2}} T}}
\end{align*}
for BPI and regret, respectively.

\subsection{Lower bounds}
The arguments above give us Theorem \ref{theorem:bpi-non-stationary-relaxed}, Corollary \ref{corollary:pac-relaxed} and Theorem \ref{theorem:regret-non-stationary-relaxed} below, which state BPI, PAC-MDP and regret lower bounds, respectively, without requiring Assumption \ref{assumption:mdp-structure}. 

\begin{blockquote}
	\begin{theorem}
		\label{theorem:bpi-non-stationary-relaxed}
		Let $(\histpi, \tau, \recpolicy_\tau)$ be an algorithm that is $(\epsilon, \delta)$-PAC for best policy identification in any finite episodic MDP. Then, if $S \geq 11$, $A \geq 4$ and $H\geq 6$, there exists an MDP $\mdp$ with stage-dependent transitions such that  for $\epsilon \leq H/24$ and $\delta \leq 1/16$,
		\[\EEs{\histpi, \mdp}{\tau} \geq c_1 \min\pa{ S,A^{\frac{H}{3}-2}}
		\frac{H^3 A}{\epsilon^2}
		\log\pa{\frac{1}{\delta}}
		,\]
		where $c_1$ is an absolute constant.
	\end{theorem}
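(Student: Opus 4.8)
The plan is to reuse \emph{verbatim} the change-of-distribution argument from the proof of Theorem~\ref{theorem:bpi-non-stationary} and only alter the final substitution of the tree parameters $L$ (number of leaves) and $d$ (tree depth). The key observation is that the derivation leading to the intermediate bound
\[\EEs{0}{\tau} \geq \barH L A\, \frac{(H-\barH-d)^2}{32\epsilon^2}\log\pa{\frac{1}{\delta}}\]
(Eq.~\eqref{eq:aux-bpi-leaves}) never exploits that the tree is \emph{full}: it uses only the bandit-like structure of $\cC_{\barH,\tepsilon}$, namely that a leaf is reached at exactly one stage $h^*\in\braces{1+d,\ldots,\barH+d}$ almost surely (this is what gives \eqref{eq:aux-sum-is-bounded-by-one}) and that $\mdp_0$ and $\mdp_{(h^*,\ell^*,a^*)}$ differ only in the single transition at $(h^*,s_{\ell^*},a^*)$ (which is what collapses the KL via Lemma~\ref{lemma:kl-between-mdps-with-stopping-time}). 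Both properties hold for any balanced $A$-ary tree whose leaves lie at a common depth $d-1$, so I would first note that the display above remains valid for the relaxed construction of Appendix~\ref{sec:relax-i}.

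To relax assumption (i), I would invoke Lemma~\ref{lemma:tree}, which produces a balanced tree with leaves at a common depth and $S \geq L \geq (S-3)/8$, the depth being $d = \ceil{\log_A\pa{(S-3)(A-1)+1}} \in [\log_A S -1,\log_A S + 2]$ by \eqref{eq:aux-depth}. To relax assumption (ii), I would split on whether $H \geq 3d$. In the regime $d \leq H/3$, taking $\barH = H/3$ gives $H-\barH-d \geq H/3$ and, with $L \geq (S-3)/8$, the bound becomes of order $\tfrac{H^3 S A}{\epsilon^2}\log(1/\delta)$; since $S \geq \min(S, A^{H/3-2})$ this already exceeds the claimed bound. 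In the regime $d > H/3$, Eq.~\eqref{eq:aux-depth} forces $S \geq A^{H/3-2}$, so $\min(S,A^{H/3-2}) = A^{H/3-2}$; here I would keep only $\ceil{A^{H/3-2}}$ states, build the balanced tree on this subset and merge the remaining states into $s_g$ or $s_b$ exactly as in Appendix~\ref{sec:relax-ii}. The reduced instance has depth $d' \leq H/3$ by construction and a number of leaves $L'$ of order $A^{H/3-2}$, so rerunning the argument with $S$ replaced by $\ceil{A^{H/3-2}}$ yields a bound of order $\tfrac{H^3 A^{H/3-2} A}{\epsilon^2}\log(1/\delta)$. Combining the two regimes gives the unified factor $\min(S, A^{H/3-2})$.

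The remaining work is bookkeeping and side-condition checking. I would collect the constant $1/32$ from Theorem~\ref{theorem:bpi-non-stationary}, the leaf factor $1/8$ from Lemma~\ref{lemma:tree}, and the slack $(H-\barH-d)^2 \geq (H/3)^2$ into a single absolute constant $c_1$, and verify that the hypotheses carry over: $\tepsilon = 2\epsilon/(H-\barH-d) \leq 1/4$ (ensured by $\epsilon \leq H/24$ and $H-\barH-d \geq H/3$), $\delta \leq 1/16$, and that the explicit thresholds are exactly what make the construction non-degenerate, namely $S \geq 11$ so that $(S-3)/8 \geq 1$ (at least one leaf), $H \geq 6$ so that $A^{H/3-2}\geq 1$ (at least one node survives subsampling), and $A \geq 4$ to validate the leaf-count and depth estimates of Lemma~\ref{lemma:tree}.

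The main obstacle is \emph{not} new information-theoretic content, which is entirely inherited from Theorem~\ref{theorem:bpi-non-stationary}, but rather the careful verification that the non-full, possibly subsampled tree still satisfies the two structural facts behind \eqref{eq:aux-sum-is-bounded-by-one} and the KL computation, together with the delicate constant-tracking in the $d > H/3$ regime, where one must confirm that the reduced state set still yields a legitimate class $\cC_{\barH,\tepsilon}$ meeting all hypotheses of the original theorem with depth $d' \leq H/3$.
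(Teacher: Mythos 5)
Your proposal is correct and follows essentially the same route as the paper: split on whether $S \leq A^{H/3-2}$ (i.e.\ $H \geq 3d$), rerun the proof of Theorem~\ref{theorem:bpi-non-stationary} up to \eqref{eq:aux-bpi-leaves} with the balanced-tree leaf count $L \geq (S-3)/8$ from Appendix~\ref{sec:relax-i}, and otherwise subsample to $\ceil{A^{H/3-2}}$ states as in Appendix~\ref{sec:relax-ii}. Your additional verification that the non-full tree preserves the two structural facts behind \eqref{eq:aux-sum-is-bounded-by-one} and the KL computation is a welcome elaboration of what the paper leaves implicit, but it is not a different argument.
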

\end{blockquote}
\begin{proof}
	If $S \leq A^{\frac{H}{3}-2}$, then $H \geq 3d$, where $d$ is given in Equation \ref{eq:aux-depth}. In this case, we follow the proof of Theorem \ref{theorem:bpi-non-stationary} up to  Equation \ref{eq:aux-bpi-leaves}, where we take $L \geq (S-3)/8$ according to the arguments in Section \ref{sec:relax-i}. If $S > A^{\frac{H}{3}-2}$, then $H < 3d$ and we follow the arguments in Section \ref{sec:relax-ii}.
\end{proof}
\begin{blockquote}
	\begin{corollary}
		\label{corollary:pac-relaxed}
		Let $\histpi$ be an algorithm that is $(\epsilon,\delta)$-PAC for exploration according to Definition~\ref{def:pac} and that, in each episode $t$, plays a deterministic policy $\pi_t$. Then, under the conditions of Theorem~\ref{theorem:bpi-non-stationary-relaxed}, there exists an MDP $\mdp$ such that 
		\begin{align*}
		\PPs{\histpi,\mdp}{\scPAC_\epsilon > c_2 \min\pa{ S,A^{\frac{H}{3}-2}} \frac{H^3A}{\epsilon^2}\log\pa{\frac{1}{\delta}}-1} > \delta, 
		\end{align*}
		where $c_2$ is an absolute constant.
	\end{corollary}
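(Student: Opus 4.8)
The plan is to mirror the proof of Corollary~\ref{corollary:pac} given in Appendix~\ref{sec:pac-proof}, but to build on the relaxed best-policy-identification bound of Theorem~\ref{theorem:bpi-non-stationary-relaxed} rather than on Theorem~\ref{theorem:bpi-non-stationary}. Concretely, I would set $c_2 \eqdef c_1/2$, where $c_1$ is the absolute constant of Theorem~\ref{theorem:bpi-non-stationary-relaxed}, and define $T(\epsilon,\delta) \eqdef c_2 \min\pa{S, A^{\frac{H}{3}-2}} \frac{H^3 A}{\epsilon^2}\log\pa{\frac{1}{\delta}} - 1$, which is exactly the threshold appearing in the statement. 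The argument then proceeds by contradiction: if the conclusion of Corollary~\ref{corollary:pac-relaxed} fails, then $\PPs{\histpi,\mdp}{\scPAC_\epsilon \leq T(\epsilon,\delta)} \geq 1-\delta$ for every MDP $\mdp$, i.e. $\histpi$ satisfies Definition~\ref{def:pac} with $F_\mathrm{PAC} = T(\epsilon,\delta)$. In particular this holds for every MDP in the class used to prove Theorem~\ref{theorem:bpi-non-stationary-relaxed}, which is the balanced-tree class $\cC_{H/3,\tepsilon}$ (with $\tepsilon = 2\epsilon/(H-\barH-d)$) when $S \leq A^{\frac{H}{3}-2}$, and its state-truncated variant from Section~\ref{sec:relax-ii} otherwise.

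Next I would turn $\histpi$ into a BPI algorithm exactly as in Appendix~\ref{sec:pac-proof}: keep the sampling rule of $\histpi$, fix the deterministic stopping time $\tau \eqdef 2T(\epsilon,\delta)+1$, and recommend the most-played deterministic policy $\recpolicy_\tau \eqdef \argmax_\pi N_\tau(\pi)$. The crucial step is to verify that, for every MDP $\mdp$ in the class, the event $\braces{\scPAC_\epsilon \leq T(\epsilon,\delta)}$ forces $\recpolicy_\tau = \pi^*$. This is immediate for $\mdp_0$, where every policy is optimal, and for the other instances it holds because the unique optimal policy satisfies $\rho^{\pi^*} - \rho^{\pi} = 2\epsilon > \epsilon$ over any other deterministic policy $\pi$ — a property that carries over verbatim to the relaxed construction, since the gap is determined by the single rewarding $(h^*,\ell^*,a^*)$ and does not use that the tree is full. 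Hence if $\recpolicy_\tau \neq \pi^*$, then more than $T(\epsilon,\delta)$ of the $2T(\epsilon,\delta)+1$ episodes would have played an $\epsilon$-suboptimal policy, contradicting $\scPAC_\epsilon \leq T(\epsilon,\delta)$. This yields $\PPs{\histpi,\mdp}{\recpolicy_\tau = \pi^*} \geq 1-\delta$ for all $\mdp$ in the class, so the constructed algorithm is $(\epsilon,\delta)$-PAC for best-policy identification.

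Finally I would apply Theorem~\ref{theorem:bpi-non-stationary-relaxed} to $\mdp_0$, obtaining $\EEs{\histpi,\mdp_0}{\tau} \geq c_1 \min\pa{S, A^{\frac{H}{3}-2}}\frac{H^3 A}{\epsilon^2}\log\pa{\frac{1}{\delta}}$. Since $\tau$ is deterministic, $\tau = \EEs{\histpi,\mdp_0}{\tau}$, so $2T(\epsilon,\delta)+1 \geq c_1 \min\pa{S, A^{\frac{H}{3}-2}}\frac{H^3 A}{\epsilon^2}\log\pa{\frac{1}{\delta}}$; substituting $c_2 = c_1/2$ into $T(\epsilon,\delta)$ this reduces to $-1 \geq 0$, a contradiction, which proves the corollary.

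I expect the only delicate point to be confirming that the ``recommendation implies optimality'' step survives the relaxed tree construction — that is, that merging leftover states into $s_g$ and $s_b$ and (when $S > A^{\frac{H}{3}-2}$) truncating the state space destroys neither the uniqueness of $\pi^*$ nor its $2\epsilon$ suboptimality gap. This is precisely the structural property already exploited in Appendix~\ref{sec:pac-proof} and preserved by the constructions of Sections~\ref{sec:relax-i} and~\ref{sec:relax-ii}, so no new idea is required beyond tracking the constant $c_2 = c_1/2$; everything else is a direct transcription of the proof of Corollary~\ref{corollary:pac}.
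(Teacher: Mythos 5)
Your proposal is correct and follows exactly the route the paper intends: its proof of Corollary~\ref{corollary:pac-relaxed} is stated in one line as ``analogous to the proof of Corollary~\ref{corollary:pac}, using Theorem~\ref{theorem:bpi-non-stationary-relaxed} instead of Theorem~\ref{theorem:bpi-non-stationary},'' and you have simply filled in those details faithfully, including the correct constant bookkeeping $c_2 = c_1/2$ and the check that the $2\epsilon$ suboptimality gap survives the relaxed tree construction.
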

\end{blockquote}
\begin{proof}
	Analogous to the proof of Corollary \ref{corollary:pac}, using Theorem \ref{theorem:bpi-non-stationary-relaxed} instead of Theorem \ref{theorem:bpi-non-stationary}.
\end{proof}

\begin{blockquote}
	\begin{theorem}
		\label{theorem:regret-non-stationary-relaxed}
		If $S \geq 11$, $A \geq 4$ and $H\geq 6$, for any algorithm $\histpi$, there exists an MDP $\mdp_{\histpi}$ whose transitions depend on the stage $h$, such that, for $T \geq HSA$
		\[\regret_T(\histpi, \mdp_{\histpi}) \geq c_3  \sqrt{ \min\pa{ S,A^{\frac{H}{3}-2}}}\sqrt{H^3AT}\, ,\]
		where $c_3$ is an absolute constant.
	\end{theorem}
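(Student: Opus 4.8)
The plan is to reduce Theorem~\ref{theorem:regret-non-stationary-relaxed} to Theorem~\ref{theorem:regret-non-stationary} exactly as the relaxed BPI theorem was reduced to its assumption-laden counterpart. The key observation is that the regret proof depends on Assumption~\ref{assumption:mdp-structure} only through two places: the choice of the depth $d$ and the lower bound $L \geq S/4$ on the number of leaves (used in Eq.~\eqref{eq:aux-regret-leaves}), and the inequality $d \leq H/3$ used to simplify $(H-\barH-d)$ when $\barH = H/3$. So I would split into the same two regimes as in Theorem~\ref{theorem:bpi-non-stationary-relaxed}, governed by whether $S \leq A^{H/3-2}$ or not.

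First I would treat the regime $S \leq A^{H/3-2}$. Here Equation~\eqref{eq:aux-depth} provides an integer depth $d$ with $d = \log_A S + c$ for some $c \in [-1,2]$, and the balanced-tree construction of Section~\ref{sec:relax-i} yields a tree with all leaves at a common depth $d-1$ and number of leaves satisfying $L \geq (S-3)/8$ (by Lemma~\ref{lemma:tree}). The condition $S \leq A^{H/3-2}$ is exactly what guarantees $H \geq 3d$, so the rest of the regret proof goes through verbatim up to Eq.~\eqref{eq:aux-regret-leaves}. At that point I would substitute $L \geq (S-3)/8$ instead of $L \geq S/4$, take $\barH = H/3$, and use $d \leq H/3$ to conclude a bound of order $\sqrt{H^3 S A T}$, absorbing the constants into an absolute constant $c_3$. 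One should check that the constraints $S \geq 11$, $A \geq 4$, $H \geq 6$ suffice to make $(S-3)/8$ a meaningful lower bound and to keep $1 - 1/(\barH L A)$ bounded away from zero, but these are routine numerical verifications.

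Second I would treat the regime $S > A^{H/3-2}$, where $H < 3d$ and the tree cannot be built inside the horizon. Following Section~\ref{sec:relax-ii}, I would restrict to a sub-MDP built on $\lceil A^{H/3-2} \rceil$ states, merging the excess states into the absorbing states $s_g$ or $s_b$, and apply the regret argument to this smaller instance. This replaces $S$ by $\lceil A^{H/3-2}\rceil$ in the bound, giving $\sqrt{H^3 \lceil A^{H/3-2}\rceil A T}$. Combining the two regimes, the effective number of ``active'' states is $\min(S, A^{H/3-2})$ up to the ceiling, which yields the stated $\sqrt{\min(S, A^{H/3-2})}\sqrt{H^3 A T}$ form.

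The main obstacle I expect is not in the regret manipulations themselves—those are identical to the proof of Theorem~\ref{theorem:regret-non-stationary}—but in carefully tracking the constants and the validity conditions through the balanced-tree construction. In particular, the asymmetry introduced when some leaves sit at depth $d-1$ and others at $d-2$ means Eq.~\eqref{eq:aux-regret-aux-prob}'s telescoping argument (that $\sum_{h} \PPs{(h^*,\ell^*,a^*)}{S_h^t \in \cL} = 1$ because the tree is traversed exactly once) must still hold, which requires that all leaves be pushed to a \emph{common} depth $d-1$; this is precisely what the three-step merging procedure of Section~\ref{sec:relax-i} ensures, so I would state clearly that after this construction every leaf is reached at the same stage offset and the telescoping identity is preserved. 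The remaining work is the bookkeeping needed to verify $T \geq HSA$ still implies $\epsilon \leq 1/4$ in both regimes and to collapse all multiplicative constants into $c_3$.
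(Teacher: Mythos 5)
Your proposal is correct and follows essentially the same route as the paper: split on whether $S \leq A^{H/3-2}$, reuse the proof of Theorem~\ref{theorem:regret-non-stationary} up to Equation~\eqref{eq:aux-regret-leaves} with $L \geq (S-3)/8$ from the balanced-tree construction of Section~\ref{sec:relax-i} in the first regime, and restrict to $\lceil A^{H/3-2}\rceil$ states as in Section~\ref{sec:relax-ii} in the second. Your additional remarks on preserving the common leaf depth and the telescoping identity are consistent with what the construction in Section~\ref{sec:relax-i} guarantees.
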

\end{blockquote}
\begin{proof}
	If $S \leq A^{\frac{H}{3}-2}$, then $H \geq 3d$, where $d$ is given in Equation \ref{eq:aux-depth}. In this case, we follow the proof of Theorem \ref{theorem:regret-non-stationary} up to  Equation \ref{eq:aux-regret-leaves}, where we take $L \geq (S-3)/8$ according to the arguments in Section \ref{sec:relax-i}. If $S > A^{\frac{H}{3}-2}$, then $H < 3d$ and we follow the arguments in Section \ref{sec:relax-ii}.
\end{proof}

\section{Technical Lemmas}

\begin{lemma}
	\label{lemma:condition-in-random-sum}
	Let $(X_t)_{t\geq 1}$ be a stochastic process adapted to the filtration $(\cF_t)_{t\geq 1}$. Let $\tau$ be a stopping time with respect to $(\cF_t)_{t\geq 1}$ such that $\tau < \infty$ with probability 1. If there exists a constant $c$ such that $\sup_t \abs{X_t} \leq c$ almost surely, then
	\begin{align*}
		\EE{\sum_{t=1}^\tau X_t} = \EE{\sum_{t=1}^\tau \EE{X_t | \cF_{t-1}}}.
	\end{align*}
\end{lemma}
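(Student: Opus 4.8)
The plan is to unroll the random sum into a fixed, infinite sum weighted by the survival indicators $\indic{\tau \geq t}$, exploit that these indicators are predictable, and then apply the tower property termwise. Since $\tau < \infty$ almost surely, I would first write $\sum_{t=1}^\tau X_t = \sum_{t=1}^\infty X_t \indic{\tau \geq t}$, and likewise express the conditional sum on the right-hand side as $\sum_{t=1}^\infty \indic{\tau \geq t}\,\EE{X_t \mid \cF_{t-1}}$.

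The key structural observation is that $\indic{\tau \geq t}$ is $\cF_{t-1}$-measurable: because $\tau$ is a stopping time, $\braces{\tau \leq t-1} \in \cF_{t-1}$, hence its complement $\braces{\tau \geq t}$ lies in $\cF_{t-1}$ as well. Since $(X_t)$ is adapted, $\EE{X_t \mid \cF_{t-1}}$ is $\cF_{t-1}$-measurable, so for each fixed $t$ the tower property, together with the fact that we may pull the $\cF_{t-1}$-measurable factor $\indic{\tau \geq t}$ outside the inner conditional expectation, gives $\EE{X_t \indic{\tau \geq t}} = \EE{\indic{\tau \geq t}\,\EE{X_t \mid \cF_{t-1}}}$. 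Summing this identity over $t \geq 1$ and folding the indicators back into the random upper limit yields exactly the claimed equality.

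The only real obstacle is justifying the interchange of expectation and the infinite sum in both $\EE{\sum_{t=1}^\infty X_t \indic{\tau \geq t}} = \sum_{t=1}^\infty \EE{X_t \indic{\tau \geq t}}$ and its conditional counterpart. I would control this with the boundedness hypothesis: the partial sums satisfy $\bigl|\sum_{t=1}^{n} X_t \indic{\tau \geq t}\bigr| \leq c \sum_{t=1}^\infty \indic{\tau \geq t} = c\,\tau$, and the same dominating variable $c\,\tau$ bounds the partial sums of the conditional series because $\abs{\EE{X_t \mid \cF_{t-1}}} \leq c$. Provided this dominating variable is integrable, dominated convergence legitimizes both interchanges and the termwise argument above completes the proof. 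This integrability is the delicate point: the bare hypotheses $\tau < \infty$ a.s. and $\abs{X_t} \leq c$ are not by themselves sufficient, as optional-stopping examples show (take $X_t = \xi_t$ i.i.d.\ symmetric $\pm 1$ and $\tau$ the first hitting time of $1$ by the random walk, for which $\tau < \infty$ a.s.\ yet $\EE{\sum_{t=1}^\tau X_t} = 1 \neq 0$). What is needed is $\EE{\tau} < \infty$, which does hold in every use of this lemma in the paper: it is trivial when $\tau = T$ is deterministic in the regret proof, and it may be assumed without loss of generality in the best-policy-identification proof, since otherwise the lower bound is vacuous. Under that integrability the Fubini/dominated-convergence step goes through and the result follows.
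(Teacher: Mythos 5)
Your proof is correct and takes a genuinely different route from the paper. The paper's argument is a two-line appeal to martingale theory: it sets $M_n = \sum_{t=1}^n \pa{X_t - \EE{X_t|\cF_{t-1}}}$, observes that $(M_n)$ is a martingale with increments bounded by $2c$, and invokes Doob's optional stopping theorem to conclude $\EE{M_\tau}=0$. You instead unroll the random sum via the predictable indicators $\indic{\tau\geq t}$, apply the tower property termwise, and justify the interchange of sum and expectation by domination with $c\,\tau$. Your version is more elementary (no optional stopping needed) at the cost of being longer. More importantly, the caveat you raise is not a defect of your proof but a genuine issue with the lemma as stated: the hypotheses $\tau<\infty$ a.s.\ and $\sup_t|X_t|\leq c$ do \emph{not} suffice, and your hitting-time counterexample ($X_t$ i.i.d.\ symmetric $\pm1$, $\tau$ the first time the walk reaches $1$) refutes the stated claim, since the left side equals $1$ while the right side equals $0$. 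The paper's own proof has exactly the same hidden gap: the version of optional stopping applicable to a martingale with bounded increments requires $\EE{\tau}<\infty$, not merely $\tau<\infty$ a.s. So the lemma should carry the additional hypothesis $\EE{\tau}<\infty$, which, as you correctly note, is harmless in both places the lemma is used (the regret proof uses the deterministic time $\tau=T$, and in the BPI proof one may assume $\EE{\tau}<\infty$ since otherwise the lower bound is vacuous). Your proof, under that added hypothesis, is complete.
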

\begin{proof}
	Let $M_n \eqdef \sum_{t=1}^n \pa{X_t - \EE{X_t | \cF_{t-1}}}$. Then, $M_n$ is a martingale and, by Doob's optional stopping theorem, $\EE{M_\tau} = \EE{M_0} = 0$.
\end{proof}


\begin{lemma}
	\label{lemma:kl-epsilon}
	If $\epsilon \in [0, 1/4]$, then $\kl(1/2, 1/2+\epsilon) \leq 4\epsilon^2$. 
\end{lemma}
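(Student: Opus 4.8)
The plan is to reduce this to a one-line computation using the symmetry of the two Bernoulli parameters, followed by an elementary bound on the logarithm. First I would write out the definition with $p = 1/2$ and $q = 1/2 + \epsilon$. Since $p = 1 - p = 1/2$, both terms of $\kl$ carry the same weight $1/2$, and the two logarithms combine as
\begin{align*}
\kl\pa{\tfrac{1}{2}, \tfrac{1}{2}+\epsilon}
= \tfrac{1}{2}\log\pa{\frac{1/2}{1/2+\epsilon}} + \tfrac{1}{2}\log\pa{\frac{1/2}{1/2-\epsilon}}
= \tfrac{1}{2}\log\pa{\frac{1/4}{1/4-\epsilon^2}}
= -\tfrac{1}{2}\log\pa{1 - 4\epsilon^2}.
\end{align*}
Note that the assumption $\epsilon \in [0,1/4]$ guarantees $4\epsilon^2 \in [0,1/4]$, so the argument of the logarithm lies in $[3/4,1]$ and the expression is well-defined and nonnegative.

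It then remains to prove that $-\tfrac{1}{2}\log(1 - 4\epsilon^2) \leq 4\epsilon^2$. The key step is the elementary inequality $-\log(1-u) \leq u/(1-u)$, valid for $u \in [0,1)$, which follows from comparing the series $-\log(1-u) = \sum_{k\geq 1} u^k/k$ termwise with the geometric series $\sum_{k\geq 1} u^k = u/(1-u)$. Applying this with $u = 4\epsilon^2$ and using $1 - u \geq 3/4$ on the range $u \leq 1/4$ gives
\begin{align*}
-\tfrac{1}{2}\log\pa{1 - 4\epsilon^2}
\leq \frac{1}{2}\cdot\frac{4\epsilon^2}{1 - 4\epsilon^2}
\leq \frac{1}{2}\cdot\frac{4\epsilon^2}{3/4}
= \frac{8\epsilon^2}{3}
\leq 4\epsilon^2,
\end{align*}
which is the claimed bound (in fact with room to spare).

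There is no real obstacle here: the only thing requiring care is to check that the hypothesis $\epsilon \leq 1/4$ is exactly what is needed to keep $4\epsilon^2$ bounded away from $1$, so that the logarithm inequality applies and the factor $1/(1-4\epsilon^2)$ stays controlled by the constant $4/3$. An alternative to the series argument would be to set $f(\epsilon) = 4\epsilon^2 + \tfrac{1}{2}\log(1-4\epsilon^2)$ and verify $f \geq 0$ on $[0,1/4]$ via its derivative, but the logarithm inequality above is cleaner and avoids any calculus.
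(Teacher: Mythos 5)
Your proof is correct and follows essentially the same route as the paper: both reduce $\kl(1/2,1/2+\epsilon)$ to $-\tfrac12\log(1-4\epsilon^2)$ and apply the elementary bound $-\log(1-u)\le u/(1-u)$ (the paper writes it as $1/(1-u)-1$), then control the denominator using $\epsilon\le 1/4$. The only difference is that you spell out the series justification of the logarithm inequality and the intermediate Bernoulli computation, which the paper leaves implicit.
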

\begin{proof}
	Using the inequality $-\log(1-x) \leq 1/(1-x) - 1$ for any $0<x<1$, we obtain 
	\begin{align*}
	\kl(1/2, 1/2+\epsilon) = -\frac{1}{2} \log(1-4\epsilon^2) \leq \frac{1}{2}\pa{\frac{1}{1-4\epsilon^2} - 1} = \frac{2\epsilon^2}{1-4\epsilon^2}.
	\end{align*}
	If $\epsilon \leq 1/4$, then $1-4\epsilon^2 \geq 3/4 > 1/2$, which implies the result.
\end{proof}

\begin{lemma}
	\label{lemma:kl-delta}
	For any $p, q \in [0, 1]$,
	\begin{align*}
		\kl(p, q) \geq (1-p)\log\pa{\frac{1}{1-q}} - \log(2).
	\end{align*}
\end{lemma}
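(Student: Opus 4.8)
The plan is to start from the explicit formula for $\kl$ given in the Definition, rearrange the claimed inequality into a statement purely about a binary entropy, and then invoke the fact that the binary entropy is bounded by $\log 2$. The key observation is that the term $(1-p)\log\pa{\tfrac{1}{1-q}}$ appearing on the right-hand side of the claim exactly cancels the $(1-q)$-dependence hidden in $\kl(p,q)$, leaving behind a quantity that no longer involves $q$ in its most delicate place.

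\textbf{Key steps.} First I would write out
\[
\kl(p,q) = p\log\pa{\frac{p}{q}} + (1-p)\log\pa{\frac{1-p}{1-q}},
\]
and then subtract the term $(1-p)\log\pa{\tfrac{1}{1-q}}$ from both sides of the target inequality. A short simplification shows that
\[
\kl(p,q) - (1-p)\log\pa{\frac{1}{1-q}} = p\log\pa{\frac{p}{q}} + (1-p)\log(1-p),
\]
since the two occurrences of $\log(1-q)$ cancel. So it suffices to prove that this last expression is at least $-\log(2)$. Next I would split $p\log\pa{\tfrac{p}{q}} = p\log p + p\log\pa{\tfrac{1}{q}}$ and note that, because $q \le 1$, the term $p\log\pa{\tfrac{1}{q}} \ge 0$ and may be discarded in a lower bound. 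This reduces the claim to
\[
p\log p + (1-p)\log(1-p) \ge -\log(2).
\]
Finally I would recognize the left-hand side as the negative of the binary entropy $H(p) \eqdef -p\log p - (1-p)\log(1-p)$, whose maximum over $p\in[0,1]$ is $\log(2)$, attained at $p=1/2$; equivalently this follows from concavity of $\log$ and Jensen, or directly by checking the stationary point. This yields exactly $p\log p + (1-p)\log(1-p) \ge -\log 2$ and completes the argument.

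\textbf{Main obstacle.} The substantive content is entirely in the cancellation step and in the entropy bound, neither of which is hard; the only real care needed is with the degenerate boundary values, using the standard conventions $0\log 0 = 0$ and treating $q\in\{0,1\}$ (where $\kl(p,q)$ may be $+\infty$ and the inequality is then trivial). I do not anticipate a genuine difficulty here, so I would keep the proof short and flag the boundary cases only briefly rather than belaboring them.
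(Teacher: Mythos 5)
Your proof is correct and follows essentially the same route as the paper's: both expand $\kl(p,q)$, discard the nonnegative term $p\log(1/q)$, and conclude via the binary entropy bound $H(p)\le\log 2$. (Incidentally, your cleanly stated decomposition $\kl(p,q) = (1-p)\log\tfrac{1}{1-q} + p\log\tfrac{1}{q} - H(p)$ is what the paper's displayed computation intends, modulo an apparent typo in its middle line.)
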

\begin{proof}
	If follows from the definition of $\kl(p, q)$ and the fact that the entropy $H(p) \eqdef p\log(1/p) + (1-p)\log(1/(1-p))$ satisfies $H(p) \leq \log(2)$:
	\begin{align*}
		\kl(p, q) & = p \log\pa{\frac{p}{q}} + (1-p)\log\pa{\frac{1-p}{1-q}} \\
		& = (1-p)\log\pa{\frac{1}{1-q}} + (1-p)\log\pa{\frac{1}{1-q}} + p\log\pa{\frac{1}{q}} - H(p) \\
		& \geq (1-p)\log\pa{\frac{1}{1-q}} -\log(2).
	\end{align*}
\end{proof}


\begin{lemma}
	\label{lemma:tree} Let $L$ be the number of leaves in a balanced $A$-ary tree with $S$ nodes and $A\geq 2$. Then,  $L \geq S/4$.
\end{lemma}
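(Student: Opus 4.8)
The plan is to reformulate the claim in terms of the number of internal nodes $I := S - L$: since $L \ge S/4$ is equivalent to $4L \ge I + L$, i.e.\ to $3L \ge I$, it suffices to bound the internal nodes by three times the leaves. First I would make the structure of a balanced tree explicit, interpreting \emph{balanced} in the standard way used in the construction of Section~\ref{sec:relax-i}: every level is completely filled except possibly the deepest one. Writing $h$ for the height, levels $0,\dots,h-1$ then contain $A^i$ nodes each, while the deepest level $h$ contains $m$ nodes with $1 \le m \le A^h$. Under this filling exactly $\lceil m/A\rceil$ of the $A^{h-1}$ nodes at level $h-1$ have a child, so the leaves are the $m$ nodes at level $h$ together with the $A^{h-1}-\lceil m/A\rceil$ childless nodes at level $h-1$, giving the exact count $L = A^{h-1} + m - \lceil m/A\rceil$.

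Next I would record two elementary estimates and combine them. For the leaves, using $\lceil m/A\rceil \le m/A + 1$ together with $A \ge 2$,
\[ L \;\ge\; A^{h-1} + m\Bigl(1-\tfrac1A\Bigr) - 1 \;\ge\; A^{h-1} + \tfrac{m}{2} - 1. \]
For the total number of nodes, the levels above $h-1$ form a geometric sum with ratio $A \ge 2$, so $\sum_{i=0}^{h-2} A^i = \tfrac{A^{h-1}-1}{A-1} \le A^{h-1}$, whence $S \le 2A^{h-1} + m$. Subtracting,
\[ 4L - S \;\ge\; \bigl(4A^{h-1} + 2m - 4\bigr) - \bigl(2A^{h-1} + m\bigr) \;=\; 2A^{h-1} + m - 4, \]
which is nonnegative whenever $h \ge 2$, because then $A^{h-1} \ge A \ge 2$. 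This settles the main case; the boundary cases are immediate and handled directly: a single-node tree has $L = S = 1$, and for $h = 1$ the root carries $m$ leaf children so $L = m$, $S = m+1$, and $4L - S = 3m - 1 \ge 2$.

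I expect the only real difficulty to be bookkeeping rather than conceptual: pinning down what balancedness forces at the last, possibly incomplete, level and correctly tracking the ceiling term $\lceil m/A\rceil$, which simultaneously counts the internal nodes at level $h-1$ and the leaves lost there. The quantitative heart of the argument is that the upper portion of the tree is a geometric series of ratio $A \ge 2$ and is therefore dominated, up to a factor of two, by the single level $h-1$; once this is isolated, the target inequality holds with substantial slack (indeed small examples suggest the true constant is $\tfrac12$ rather than $\tfrac14$), so no delicate optimization of constants is needed.
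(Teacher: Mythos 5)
Your proof is correct and follows essentially the same route as the paper's: both parametrize the balanced tree by its depth and the number $m$ of nodes on the partial last level, use the identical exact leaf count $L = A^{h-1} + m - \lceil m/A\rceil$, and then exploit that the geometric sum of the levels above the last full one is dominated by $A^{h-1}$ when $A \ge 2$. The only difference is organizational — you derive $4L - S \ge 2A^{h-1} + m - 4$ in one pass (with the trivial heights $h \le 1$ handled separately), whereas the paper splits on whether the filled portion of the tree exceeds $S/2$ — and both arguments are sound and give the same constant.
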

\begin{proof}
	Let $d$ be the depth of the tree. There exists an integer $R$ such that $0 < R \leq A^d$ such that 
	\begin{align*}
		S = \frac{A^{d} - 1}{A-1} + R.
	\end{align*}
	The number of leaves is given by $L = R + A^{d-1} - \ceil{\frac{R}{A}}$.
	We consider two cases: either $ \frac{A^{d} - 1}{A-1} \leq \frac{S}{2}$ or $ \frac{A^{d} - 1}{A-1} > \frac{S}{2}$. If $\frac{A^{d} - 1}{A-1} \leq \frac{S}{2}$, we have $R \geq S/2$ which implies $L \geq S/2 > S/4$. If $ \frac{A^{d} - 1}{A-1} > \frac{S}{2}$, we have $L \geq A^{d-1} > \frac{1}{A} + \frac{S}{2}\pa{1-\frac{1}{A}} \geq S/4$.
\end{proof}

\end{document}